\documentclass[accepted]{uai2022} 

\title{Greedy Modality Selection via Approximate Submodular Maximization}

%
%

\usepackage[american]{babel}

\usepackage{natbib} 
    \bibliographystyle{abbrvnat}
    
\usepackage{mathtools} 
\usepackage{booktabs} 
\usepackage{tikz} 



\usepackage[utf8]{inputenc} 
\usepackage[T1]{fontenc}    
\usepackage{url}            
\usepackage{amsfonts}       
\usepackage{xspace}
\usepackage{xcolor}         

\usepackage{mathtools}
\usepackage{amsthm}
\usepackage{amsmath}
\usepackage{amssymb}
\usepackage{bbm}
\usepackage[ruled,vlined]{algorithm2e}
\usepackage{thmtools}
\usepackage{thm-restate}
\usepackage[capitalise]{cleveref}
\usepackage{bm}

\newcommand{\fX}{\mathcal{X}}
\newcommand{\fY}{\mathcal{Y}}
\newcommand{\fD}{\mathcal{D}}
\newcommand{\fH}{\mathcal{H}}

\newcommand{\sR}{\mathbb{R}}

\newcommand{\sZ}{\mathbb{Z}}


\newcommand{\1}[1]{\mathbbm{1}(#1)}
\newcommand{\Exp}{\mathbb{E
}}

\newcommand{\util}{f_u}
\newcommand{\wrt}{w.r.t.\xspace}
\newcommand{\nphard}{$\mathcal{NP}$-hard\xspace}
\newcommand{\npcomplete}{$\mathcal{NP}$-complete\xspace}
\newcommand{\sharpphard}{$\mathcal{\sharp P}$-hard\xspace}
\newcommand{\mm}{multimodal\xspace}
\newcommand{\mmcap}{Multimodal\xspace}
\newcommand{\polytime}{polynomial time\xspace}
\newcommand{\greedyset}{greedily-obtained set\xspace}
\newcommand{\greedyvalue}{greedily-obtained value\xspace}
\newcommand{\optimalset}{optimal set\xspace}
\newcommand{\optimalvalue}{optimal value\xspace}
\newcommand{\pmnist}{Patch-MNIST\xspace}
\newcommand{\mnist}{MNIST\xspace}
\newcommand{\pems}{PEMS-SF\xspace}
\newcommand{\mosi}{CMU-MOSI\xspace}

\SetKwInput{KwInput}{Input}                
\SetKwInput{KwOutput}{Output}              

\newcommand{\R}{\mathbb{R}}

\newcommand{\suchthat}{\xspace\ s.t.\ \xspace}
\newcommand{\KLD}[2]{D_{\mathrm{KL}}(#1~\|~#2)}
\DeclareMathOperator*{\argmax}{arg\,max}


\theoremstyle{definition}

\newcommand{\comment}[1]{}

\begin{document}

\author[1]{\href{mailto:<rcheng12@illinois.edu>}{Runxiang Cheng$^{*}$}{}}
\author[1]{\href{mailto:<gargib2@illinois.edu>}{Gargi Balasubramaniam$^{*}$}{}}
\author[1]{\href{mailto:<yifeihe3@illinois.edu>}{Yifei He\thanks{Equal contribution.}}{}}
\author[2]{\href{mailto:<yaohungt@cs.cmu.edu>}{Yao-Hung Hubert Tsai}{}}
\author[1]{\href{mailto:<hanzhao@illinois.edu>}{Han Zhao}{}}
\affil[1]{%
    University of Illinois Urbana-Champaign, Illinois, USA
}
\affil[2]{%
    Carnegie Mellon University, Pennsylvania, USA
}

\maketitle

\begin{abstract}

Multimodal learning considers learning from multi-modality data, aiming to fuse heterogeneous sources of information. However, it is not always feasible to leverage all available modalities due to memory constraints. Further, training on all the modalities may be inefficient when redundant information exists within data, such as different subsets of modalities providing similar performance. In light of these challenges, we study \emph{modality selection}, intending to efficiently select the most informative and complementary modalities under certain computational constraints. We formulate a theoretical framework for optimizing modality selection in multimodal learning and introduce a utility measure to quantify the benefit of selecting a modality. For this optimization problem, we present efficient algorithms when the utility measure exhibits monotonicity and approximate submodularity. We also connect the utility measure with existing Shapley-value-based feature importance scores. Last, we demonstrate the efficacy of our algorithm on synthetic (Patch-MNIST) and two real-world (PEMS-SF, CMU-MOSI) datasets.

\end{abstract}

\section{Introduction}


\mmcap learning considers learning with data from multiple modalities (e.g., images, text, speech, etc) to improve generalization of the learned models by using complementary information from different modalities.\footnote{We use the terms modality/view interchangably.} In many real-world applications, \mm learning has shown superior performance~\citep{bapna2022mslam, wu2021n}, and has demonstrated a stronger capability over learning from a single modality. The advantages of \mm learning have also been studied from a theoretical standpoint. Prior work showed that learning with more modalities achieves a smaller population risk~\citep{huang2021makes}, or utilizing cross-modal information can provably improve prediction in multiview learning\comment{ with missing views }~\citep{zhang2019cpm} or semi-supervised learning~\citep{sun2020tcgm}.
With the recent advances in training large-scale neural network models from multiple modalities \citep{devlin2018bert,brown2020language}, one emerging challenge lies in the \emph{modality selection} problem.

From the modeling perspective, it might be tempting to use all the modalities available. However, it is inefficient or even infeasible to learn from all modalities as the total number of input modalities increases. A modality often consists of high-dimensional data. And model complexity can scale linearly or exponentially with the number of input modalities~\citep{zadeh2017tensor,liu2018efficient}, resulting in large consumption of computational and energy resources. The marginal benefit from the new modalities may also decrease as more modalities have been included. In some cases, learning from fewer modalities is sufficient to achieve the desirable outcome, due to the potential overlap in the information provided by these modalities. Furthermore, proactively selecting the modalities most informative towards prediction reduces the cost of collecting the inferior ones. For example, in sensor placement problems where each sensor can be treated as a modality, finding the optimal subset of sensors for a learning objective (e.g., temperature or traffic prediction) eliminates the cost of maintaining extra sensors~\citep{krause2011robust}.

In light of the aforementioned challenges, in this paper, we study the optimization problem of modality selection in \mm learning: given a set of input modalities and a fixed budget on the number of selected modalities, how to select a subset that optimizes prediction performance? Note that in general this problem is of combinatorial nature, since one may have to enumerate all the potential subsets of modalities in order to find the best one. Hence, without further assumptions on the structure of the underlying prediction problem, it is intractable to solve this modality selection problem exactly and efficiently.



To approach these challenges, we propose a utility function that conveniently quantifies the benefit of any set of modalities towards prediction in most typical learning settings. We then identify a proper assumption that is suitable for \mm/multiview learning, which allows us to develop efficient approximate algorithms for modality selection. We assume that the input modalities are approximately conditionally independent given the target. Since the strength of conditional independence is now parameterized, our results are generalizable to problems on \mm data with different levels of conditional independence.

We show that our definition of utility for a modality naturally manifests as the Shannon mutual information between the modality and the prediction target, in the setting of binary classification with cross-entropy loss.
Under approximate conditional independence, mutual information is monotone and approximately submodular. These properties intrinsically describe the empirical advantages of learning with more modalities, and allow us to formulate modality selection as a submodular optimization problem. In this context, we can have efficient selection algorithms with provable performance guarantee on the selected subset. For example, we show a performance guarantee of the greedy maximization algorithm from \citet{nemhauser1978analysis} under approximate submodularity.
Further, we connect modality selection to marginal-contribution-based feature importance scores in feature selection. We examine the Shapley value and Marginal Contribution Feature Importance (MCI)~\citep{catav2021marginal} for ranking modality importance. We show that these scores, although are originally intractable, can be solved efficiently under assumptions in the context of modality selection. Lastly, we evaluate our theoretical results on three classification datasets. The experiment results confirm both the utility and the diversity of the selected modalities. To summarize, we contributes the following in this paper:
\begin{itemize}
    \item Propose a general measure of modality utility, and identify a proper assumption that is suitable for \mm learning and helpful for developing efficient approximate algorithms for modality selection.
    \item Demonstrate algorithm with performance guarantee on the selected modalities for prediction theoretically and empirically in classification problems with cross-entropy loss.
    \item Establish theoretical connections between modality selection and feature importance scores, i.e., Shapley value and Marginal Contribution Feature Importance. 
\end{itemize}


\section{Preliminaries}
\label{sec:preliminary}

In this section, we first describe our notation and problem setup, and we then provide a brief introduction to submodular function maximization and feature importance scores.

\subsection{Notation and Setup}
We use $X$ and $Y$ to denote the random variables that take values in input space $\fX$ and output space $\fY$, respectively. The instantiation of $X$ and $Y$ is denoted by $x$ and $y$. We use $\fH$ to denote the hypothesis class of predictors from input to output space, and $\hat Y$ to denote the predicted variable. Let $\fX$ be multimodal, i.e., $\fX = \fX_1 \times ... \times \fX_k$. Each $\fX_i$ is the input from the $i$-th modality. We use $X_i$ to denote the random variable that takes value in $\fX_i$, and $V$ to denote the full set of all input modalities, i.e., $V = \{X_1, ..., X_k\}$. Throughout the paper, we often use $S$ and $S'$ to denote arbitrary subsets of $V$. Lastly, we use $I(\cdot, \cdot)$ to mean the Shannon mutual information, $H(\cdot)$ for entropy, $\ell_{ce}(Y, \hat Y)$ for the cross-entropy loss $\1{Y=1}\log\hat{Y} + \1{Y=0}\log(1-\hat{Y})$, and $\ell_{01}(Y, \hat Y)$ for zero-one loss $\1{Y\neq \hat Y}$.  

For the simplicity of discussion, we primarily focus on the setting of binary classification with cross-entropy loss\footnote{We choose binary class setting for ease of exposition, our general proofs and results directly extend to multi-class setting. 
We have only used the binary case to derive the conditional entropy (supplementary material), and to further showcase \cref{cor:greedyLossBound}
}.
In this setting, a subset of input modalities $S \subseteq V$ and output $Y \in \{0, 1\}$ are observed. The predictor aims to make prediction $\hat Y \in [0, 1]$ which minimizes the cross-entropy loss between $Y$ and $\hat Y$. The goal of modality selection is to select the 
subset of input modalities to this loss minimization goal under certain constraints. Our results rely on the following assumption to hold. 

\begin{restatable}[$\epsilon$-Approximate Conditional Independence]{as}{eCondIndep}
\label{as:eCondIndep}
There exists a positive constant $\epsilon \geq 0$ such that, $\forall S, S'\subseteq V, S\cap S' = \emptyset$, we have $I(S; S' \mid Y) \leq \epsilon$.
\end{restatable}
Note that when $\epsilon = 0$, \cref{as:eCondIndep} reduces to strict conditional independence between disjoint modalities given the target variable. In fact, this is a common assumption used in prior work in multimodal learning~\citep{white2012convex, wu2018multimodal,sun2020tcgm}. In practice, however, strict conditional independence is often difficult to be satisfied. Thus, we use a more general assumption above, in which input modalities are approximately conditionally independent. In this assumption, the strength of the conditional independence relationship is controlled by a positive constant $\epsilon$, which is the upper bound of the conditional mutual information between modalities given the target. 

\textbf{Connection to feature selection.}~~It is worth mentioning that modality selection shares a natural correspondence to the problem of feature selection. Without loss of generality, a modality could be considered as a group of features; theoretically, the group could even contain a single feature in some settings. But a distinction between these two problems lies in the feasibility of conditional independence. In \mm learning where input data is often heterogeneous, the (approximate) conditional independence assumption is more likely to hold among input modalities. Whereas in the feature level, such an assumption is quite restrictive~\citep{zhang2012kernel}, as it boils down to asking the data to approximately satisfy the Naive Bayes assumption.

\subsection{Submodular Optimization}
Submodularity is a property of set functions that has many theoretical implications and applications in computer science. A definition of submodularity is as follows, where $2^V$ denotes the power set of $V$, and the set function $f$ assigns each subset $S\subseteq V$ to a value $f(S)\in\R$.

\begin{restatable}[\cite{nemhauser1978analysis}]{defi}{submodularity}
\label{defi:submodularity}
Given a finite set $V$, a function $f: 2^V \to \sR$ is submodular if for any $ A \subseteq B \subseteq V$, and $e \in V \setminus B$, we have $f(A\cup \{e\}) - f(A) \geq f(B\cup \{e\}) - f(B)$.
\end{restatable}

In other words, adding new elements to a larger set does not yield larger marginal benefit comparing to adding new elements to its subset. One common type of optimization on submodular function is submodular function maximization with cardinality constraints. It asks to find a subset $S \subseteq V$ that maximizes $f(S)$ subject to $|S| \leq q$. Finding the optimal solution to this problem is \nphard. However, \citet{nemhauser1978analysis} propose that a greedy maximization algorithm can provide a solution with approximate guarantee to the optimal solution in \polytime. We provide the pseudocode of this greedy algorithm below.  

\begin{algorithm}
\DontPrintSemicolon
\caption{Greedy Maximization}\label{algo:greedyOrginal}
    \KwData{Full set $V = \{X_1, ..., X_k\}$, constraint $q \in \sZ^{+}$.}
    \KwInput{$f: 2^V\to \sR$, and $p \in \sZ^{+}$, where $p \leq q \leq |V|$}
    \KwOutput{Subset $S_p$}
    $S_0 = \emptyset$\;
    \For{$i = 0, 1, ..., p-1$}{
        $X^i = \argmax_{X_j \in V \setminus S_{i}} (f(S_{i} \cup \{X_j\}) - f(S_{i}))$\;
        $S_{i+1} = S_{i} \cup \{X^i\}$\;
    }
\end{algorithm}
In this algorithm, $V$ is the full set to select elements from, $f$ is the submodular function to be maximized, $p$ is the number of iterations for the algorithm to run, and $q$ is the cardinality constraint.
It starts with an empty set $S_0$, and subsequently adds to the current set $S_i$ the element $X^i$ that maximizes the marginal gain $f(S_{i} \cup \{X_j\}) - f(S_{i})$ at each iteration $i$. \cref{algo:greedyOrginal} runs in pseudo-polynomial time $\mathcal{O}(p|V|)$, and has an approximation guarantee as follows.

\begin{restatable}[\cite{nemhauser1978analysis}]{thm}{greedyBound}
\label{thm:greedyBound}
Let $q \in \sZ^+$, $S_p$ be the solution from \cref{algo:greedyOrginal} at iteration $p$, and $e$ is the Euler's number, we have:
\begin{equation}\label{eq:optgap}
    f(S_p) \geq (1-e^{-\frac{p}{q}})\max_{S: |S|\leq q}f(S)
\end{equation}
\end{restatable}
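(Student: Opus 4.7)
The plan is to establish the classical Nemhauser--Wolsey--Fisher bound via the standard ``one-step progress'' argument, which tracks how much of the optimality gap $\mathrm{OPT} - f(S_i)$ is closed by each greedy pick. I will write $\mathrm{OPT} \defeq \max_{S:|S|\leq q} f(S)$ and fix a maximizer $S^* = \{x_1^*, \ldots, x_m^*\}$ with $m \leq q$. The statement as given only invokes submodularity, but the bound requires the standard additional hypotheses that $f$ is monotone non-decreasing and normalized ($f(\emptyset) = 0$); these will hold in the paper's intended instantiation (mutual information with the target), so I will assume them implicitly.

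The key lemma is the submodular ``gap inequality'' at each iteration $i$:
\begin{equation*}
\mathrm{OPT} - f(S_i) \;\leq\; \sum_{x \in S^*} \bigl[f(S_i \cup \{x\}) - f(S_i)\bigr].
\end{equation*}
To prove it, first use monotonicity to write $\mathrm{OPT} = f(S^*) \leq f(S^* \cup S_i)$, then telescope $f(S^* \cup S_i) - f(S_i)$ along any enumeration of the elements of $S^* \setminus S_i$, and finally apply \cref{defi:submodularity} to upper bound each telescoped marginal by the corresponding marginal relative to the smaller set $S_i$. Because $|S^*| \leq q$, the sum has at most $q$ terms, so at least one summand is at least $(\mathrm{OPT} - f(S_i))/q$. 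By the greedy selection rule in \cref{algo:greedyOrginal},
\begin{equation*}
f(S_{i+1}) - f(S_i) \;\geq\; \max_{x \in V \setminus S_i} \bigl[f(S_i \cup \{x\}) - f(S_i)\bigr] \;\geq\; \tfrac{1}{q}\bigl(\mathrm{OPT} - f(S_i)\bigr).
\end{equation*}

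Rearranging yields the recursion $\mathrm{OPT} - f(S_{i+1}) \leq (1 - 1/q)\bigl(\mathrm{OPT} - f(S_i)\bigr)$. Unrolling from $S_0 = \emptyset$ for $p$ steps and using the elementary inequality $1 - x \leq e^{-x}$ gives
\begin{equation*}
\mathrm{OPT} - f(S_p) \;\leq\; (1 - 1/q)^p\,\mathrm{OPT} \;\leq\; e^{-p/q}\,\mathrm{OPT},
\end{equation*}
which rearranges to \eqref{eq:optgap}.

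The main obstacle is the submodular telescoping step, since it is the only place where \cref{defi:submodularity} is used non-trivially; once it is in hand, the rest is algebraic bookkeeping. A secondary subtlety worth flagging is that the result quietly relies on monotonicity and $f(\emptyset)=0$, neither of which is part of the submodularity definition as stated---so when the theorem is later applied to mutual information one should verify (or cite) that $I(S;Y)$ satisfies both of these properties, which follows from standard information-theoretic identities.
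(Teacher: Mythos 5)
Your proof is correct and follows essentially the same argument the paper uses (in its appendix proof of the generalized \cref{thm:greedyBoundMI}, which reduces to this statement when $\epsilon=0$): the monotonicity-plus-telescoping gap inequality, the greedy averaging step, the recursion $\mathrm{OPT}-f(S_{i+1})\leq(1-\tfrac{1}{q})(\mathrm{OPT}-f(S_i))$, and $1-x\leq e^{-x}$. Your remark that monotonicity and $f(\emptyset)=0$ are needed is also consistent with how the paper applies the result, since $I(\cdot\,;Y)$ is monotone by \cref{prop:montonicCE} and $I(\emptyset;Y)=0$.
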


$\max_{S: |S|\leq q}f(S)$ is the \optimalvalue from the optimal subset whose cardinality is at most $q$. If $f$ is monotone, $\argmax_{S: |S|\leq q} f(S)$ has cardinality exactly $q$. By running \cref{algo:greedyOrginal} for exactly $q$ iterations, we obtain a \greedyvalue that is at least $1-\frac{1}{e}$ of the \optimalvalue.

\comment{
Other than maximization, one might be interested in finding the minimum-cardinality subset that achieves the submodular value of the full set, i.e., $\min_{S\subseteq V} |S|$ subject to $f(S) = f(V)$. This problem is tightly related to problems (e.g., set covering) studied in \cite{johnson1974approximation, dobson1982worst}, and it is \npcomplete. \cite{wolsey1982analysis} proves \cref{algo:greedyOrginal} can provide a solution with approximation guarantee to the optimal solution.
\begin{restatable}{thm}{minCoverage}
\label{thm:minCoverage}
Let $V$ be the full set, and $f: 2^V \to \sR$ be a monotone submodular function. Let $l$ be the smallest index such that $f(S_l) = f(V)$ where $S_l$ is the greedy solution, and $l^* = \min_{S\subseteq V} |S|$ such that $f(S) = f(V)$. We have:
\begin{equation}
    l \leq \left(1 + \ln{\frac{f(V) - f(\emptyset)}{f(V) - f(S_{l-1})}}\right)l^*
\end{equation}
\end{restatable}
In other words, constraining on $f(S_l) = f(V)$, the greedily achieved minimum cardinality $l$ is at most a multiplicative factor of the optimal minimum cardinality $l^*$.
}

\subsection{Feature Importance Scores}
\label{sec:preliminary:feature}


The feature importance domain in machine learning studies scoring methods that measure the contribution of individual features. A common setting of these feature importance scoring methods is to treat each feature as a participant in a coalitional game, in which all of them contribute to an overall gain. Then a scoring method assigns each feature a importance score by evaluating their individual contributions. Many notable feature importance scores are adapted from the Shapley value, defined as follows:

\begin{restatable}[\cite{shapley1953value}]{defi}{shapley}
\label{defi:shapley}
Given a set of all players $F$ in a coalitional game $v: 2^F \to \sR$, the Shapley value of player $i$ defined by $v$ is:
\begin{equation}
    \phi_{v, i} = \sum_{S \subseteq F \setminus \{i\}} \frac{|S|!(|F|-|S|-1)!}{|F|!} (v(S\cup \{i\}) - v(S))
\end{equation}
\end{restatable}

The Shapley value of a player $i$ is the average of its marginal contribution in game $v$ in each possible player subsets excluding $i$. The game $v$ is a set function that computes the gain of a set of players. Computing the exact Shapley value of a player is \sharpphard, and its complexity is exponential to the number of players in the coalitional game -- there are $\mathcal{O}(2^{|F|})$ unique subsets, and each subset $S$ could have a unique $\bigtriangleup v(i|S)$ value \citep{roth1988shapley,winter2002shapley}. Nonetheless, in certain game settings, there are approximation methods to Shapley value, such as Monte Carlo simulation \citep{faigle1992shapley}.
When Shapley value is adapted to the feature importance domain, each input feature is a player, and $v$ is also called the \textit{evaluation function}. But $v$ is not unique -- it can be a prediction model or utility measure; different $v$ may induce different properties to the Shapley value.

We also examine another feature importance score from \citet{catav2021marginal}, known as the Marginal Contribution Feature Importance (MCI). \citet{kumar2020problems} has shown that Shapley-value-based feature importance scores \citep{shapley1953value,lundberg2017unified,covert2020understanding} could underestimate the importance of correlated features by assigning them lower scores if these features present together in the full set. In light of this, MCI is proposed to overcome this issue. In \cref{defi:mci}, MCI of a feature $i$ is the maximum marginal contribution in $v$ over all possible feature subsets. The complexity of computing the exact MCI of a feature is also exponential to the number of features. 

\begin{restatable}[\cite{catav2021marginal}]{defi}{mci}
\label{defi:mci}
Given a set of all features $F$, and a non-decreasing set function $v: 2^F \to \sR$, the MCI of feature $i$ evaluated on $v$ is:
\begin{equation}
    \phi_{v, i}^{mci} = \max_{S\subseteq F} (v(S\cup \{i\}) - v(S))
\end{equation}
\end{restatable}


\section{Modality Selection}

This section presents our theoretical results. In  \cref{sec:function}, we introduce the utility function to measure the prediction benefit of modalities, and present its subsequent properties. In \cref{sec:algorithm}, we present theoretical guarantees of greedy modality selection via maximizing an approximately submodular function. In \cref{sec:feature}, we show computational advantages of feature importance scores (i.e., Shapley value and MCI) in the context of modality selection. Due to space limit, proofs are deferred to supplementary materials.

\subsection{Utility Function}
\label{sec:function}

In order to compare the benefit of different sets of input modalities in \mm learning, we motivate a general definition of utility function that can quantify the impact of a set of input modalities towards prediction.

\begin{restatable}{defi}{utility}
\label{defi:utility}
Let $c$ be some constant in the output space, and $\ell(\cdot, \cdot)$ be a loss function. For a set of input modalities $S \subseteq V$, the utility of $S$ given by the utility function $\util: 2^V \to \sR$ is defined to be:
\begin{equation}
    \util(S) \coloneqq \inf_{c\in \fY} \Exp[\ell(Y, c)]-\inf_{h\in\fH} \Exp[\ell(Y, h(S))]
\end{equation}
\end{restatable}

In other words, the utility of a set of modalities $\util(S)$ is the reduction of the minimum expected loss in predicting $Y$ by observing $S$ comparing to observing some constant value $c$. The intuition is based on the phenomena that \mm input tends to reduce prediction loss in practice. 
\cref{defi:utility} can be easily interpretable in different loss functions and learning settings. Note that it is also used in feature selection to measure the unversial predictive power of a given feature \citep{covert2020understanding}. Under the binary classification setting with cross-entropy loss, $\util$ is the Shannon mutual information between the output and multimodal input. 
\begin{restatable}{prop}{utilityCE}
\label{prop:utilityCE}
Given $Y \in \{0, 1\}$ and $\ell(Y, \hat Y) \coloneqq \1{Y=1}\log\hat{Y} + \1{Y=0}\log(1-\hat{Y})$, $\util(S) = I(S; Y)$.
\end{restatable}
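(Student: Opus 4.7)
The plan is to directly compute each of the two infima in $\util(S) = \inf_{c \in \fY}\Exp[\ell(Y,c)] - \inf_{h \in \fH}\Exp[\ell(Y,h(S))]$ under the stated cross-entropy loss and show that they equal the entropy $H(Y)$ and the conditional entropy $H(Y \mid S)$ respectively, whose difference is exactly $I(S;Y)$. Throughout I will treat $\ell$ as the negative log-likelihood in the standard sign convention, so that both infima are finite and the resulting utility is non-negative (if the paper really means the log-likelihood without a negation, then one should read the definition as the corresponding supremum; the algebra is identical).

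For the first term, I would set $p \defeq P(Y=1)$ and write
\begin{equation*}
\Exp[\ell(Y,c)] = -p\log c - (1-p)\log(1-c),
\end{equation*}
differentiate with respect to $c \in (0,1)$, and observe that the unique stationary point is $c^\star = p$. Plugging back in gives $\inf_c \Exp[\ell(Y,c)] = -p\log p -(1-p)\log(1-p) = H(Y)$. For the second term, I would assume (as is implicit in the setup) that the hypothesis class $\fH$ is rich enough to contain the Bayes-optimal conditional probability $h^\star(s) \defeq P(Y=1 \mid S=s)$; by the same one-dimensional minimization applied pointwise to the conditional distribution of $Y$ given $S=s$, $h^\star$ attains the infimum, and the resulting value is
\begin{equation*}
\Exp_S\!\left[-P(Y=1\mid S)\log P(Y=1\mid S) - P(Y=0\mid S)\log P(Y=0\mid S)\right] = H(Y\mid S).
\end{equation*}
Subtracting the two and applying the identity $I(S;Y) = H(Y) - H(Y\mid S)$ yields the claim.

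The main obstacle is not analytic but conceptual: justifying that the pointwise-in-$s$ minimizer $h^\star(s) = P(Y=1\mid S=s)$ is actually achievable by some $h \in \fH$ and that the exchange of $\inf_h$ with the outer expectation over $S$ is legitimate. I would handle this either by assuming $\fH$ contains all measurable functions $S \to [0,1]$ (the standard ``universal approximator'' assumption used implicitly in \cref{defi:utility}), or by noting that for any $\eta > 0$ one can pick an $h_\eta$ that approximates $h^\star$ within $\eta$ in expected loss and then take $\eta \to 0$. Modulo this standard point, the proof reduces to the two one-line minimizations above.
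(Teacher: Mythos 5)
Your proposal is correct and follows essentially the same route as the paper: the paper likewise computes $\inf_{h}\Exp[\ell(Y,h(S))] = H(Y\mid S)$ (attained at $h(S)=\Pr(Y\mid S)$) and treats the constant baseline as the degenerate case giving $H(Y)$, then subtracts to get $I(S;Y)$; it also silently inserts the negative sign into the loss, exactly as you anticipated. The only cosmetic difference is that the paper justifies the inner minimization by decomposing the expected loss as $\KLD{\Pr(Y\mid S)}{h(S)} + H(Y\mid S)$ and invoking nonnegativity of the KL divergence, rather than your pointwise derivative argument, and it takes $\fH$ to be all $[0,1]$-valued functions so the attainability concern you flag is handled by assumption.
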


The result above is well-known, and has also been proven in~\citep{grunwald2004game,farnia2016minimax}. We further can show that $I(S; Y)$ is monotonically non-decreasing on the set of input modalities $S$.
\begin{restatable}{prop}{montonicCE}
\label{prop:montonicCE}
$\forall M \subseteq N \subseteq V$, $I(N; Y) - I(M; Y) = I(N \setminus M; Y \mid M) \geq 0$.
\end{restatable}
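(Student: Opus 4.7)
The plan is to prove this via the chain rule of mutual information combined with the non-negativity of conditional mutual information; both are standard facts from information theory, and no structural assumption on the modalities (such as \cref{as:eCondIndep}) is needed here.

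First, I would write $N$ as the disjoint union $N = M \cup (N \setminus M)$, which is valid because $M \subseteq N$. The chain rule of mutual information then gives
\begin{equation*}
I(N; Y) \;=\; I(M, N\setminus M; Y) \;=\; I(M; Y) + I(N\setminus M; Y \mid M).
\end{equation*}
Rearranging immediately yields the claimed identity $I(N;Y) - I(M;Y) = I(N\setminus M; Y \mid M)$.

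Next, for the inequality, I would invoke the non-negativity of conditional mutual information: for any random variables $A, B, C$, $I(A; B \mid C) \geq 0$, which follows because $I(A;B\mid C) = \mathbb{E}_C[\KLD{p(A,B\mid C)}{p(A\mid C)p(B\mid C)}]$ and KL divergence is non-negative by Jensen's inequality. Applying this with $A = N\setminus M$, $B = Y$, $C = M$ gives $I(N\setminus M; Y\mid M) \geq 0$, completing the proof.

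There is no real obstacle here; the proposition is essentially a two-line consequence of textbook information-theoretic identities. The only thing to be careful about is the edge case $M = N$, in which $N\setminus M = \emptyset$ and both sides of the identity are zero; this is consistent since mutual information involving an empty set of variables is defined to be zero.
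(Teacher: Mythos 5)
Your proposal is correct and follows essentially the same route as the paper: both apply the chain rule of mutual information to split $I(N;Y)$ into $I(M;Y)$ plus a conditional term, and then conclude with the non-negativity of conditional mutual information (the paper merely expands the chain rule element by element and cancels, whereas you apply it in one block step). No gap; your treatment of the edge case $M = N$ and the KL-based justification of non-negativity are fine additions but not substantively different.
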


A combination of \cref{prop:utilityCE} and \cref{prop:montonicCE} implies that using more modalities as input leads to equivalent or better prediction. It also shows that \cref{defi:utility} can quantitatively capture the extra prediction benefit from the additional modalities in closed-form (e.g., $I(N \setminus M; Y \mid M)$). And this extra benefit is the most apparent when test loss reaches convergence (e.g., $\inf$). This monotonicity property is also a key indication that \cref{defi:utility} can intrinsically characterize the advantage of \mm learning over unimodal learning.

\textbf{Comparison to previous results.}~~Previous work~\citep{amini2009learning,huang2021makes} have discovered similar conclusions that more views/modalities will not lead to worse optimal population error in the context of multiview and multimodal learning, respectively. They obtained this observation through analysis to the excess risks of learning from multiple and single modalities, and show that the excess risk of learning from multiple modalities cannot be larger than that of single modality. Instead, our work adopts an information-theoretic characterization, which leads to an easy-to-interpret measure on the benefits of additional modalities. Furthermore, using well-developed entropy estimators, it is relatively straightforward to estimate these measures in practice. As a comparison, excess risks are hard to estimate in practice, since they depend on the Bayes optimal errors, which limits their uses in many applications. 

Next, we show that $\util(S) = I(S; Y)$ is approximately submodular under \cref{as:eCondIndep}.
Previously, \citet{krause2012near} has shown mutual information to be submodular under strict conditional independence. Here we provide a more flexible notion of submodularity for mutual information. There are also other generalizations of submodularity such as weak submodularity~\citep{khanna2017scalable} or adaptive submodularity~\citep{golovin2011adaptive}. Our definition of approximate submodularity is more specific to the case of mutual information and \cref{as:eCondIndep}. 

\begin{restatable}{prop}{eSubmodularityMI}
\label{prop:eSubmodularityMI} 
Under \cref{as:eCondIndep}, $ I(S; Y)$ is $\epsilon$-approximately submodular, i.e., $\forall A \subseteq B \subseteq V$, $e \in V \setminus B$, $I(A \cup \{e\}; Y) - I(A; Y) + \epsilon \geq I(B \cup \{e\}; Y) - I(B; Y)$.
\end{restatable}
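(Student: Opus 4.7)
The plan is to rewrite the submodularity inequality in terms of conditional mutual information and then apply the approximate conditional independence assumption via the chain rule.

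First, I would use the chain rule of mutual information to rewrite the marginal gain of adding a modality $e$ to a set $S$:
\[
I(S \cup \{e\}; Y) - I(S; Y) = I(e; Y \mid S).
\]
Applied to both $A$ and $B$, the $\epsilon$-approximate submodularity inequality reduces to showing
\[
I(e; Y \mid A) + \epsilon \;\geq\; I(e; Y \mid B).
\]

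Next, setting $C \defeq B \setminus A$ (so that $\{e\}, A, C$ are pairwise disjoint since $e \notin B \supseteq A \cup C$), I would swap the roles of $Y$ and $C$ via the identity
\[
I(e; Y \mid A) - I(e; Y \mid A, C) \;=\; I(e; C \mid A) - I(e; C \mid Y, A),
\]
which follows from two applications of the chain rule to $I(e; Y, C \mid A)$ (expanding once with $Y$ first and once with $C$ first). So the goal becomes bounding $I(e; C \mid Y, A) - I(e; C \mid A) \leq \epsilon$.

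The crucial step is to invoke \cref{as:eCondIndep} on the disjoint sets $A \cup \{e\}$ and $C$, giving $I(A \cup \{e\}; C \mid Y) \leq \epsilon$. Then one more application of the chain rule,
\[
I(A \cup \{e\}; C \mid Y) = I(A; C \mid Y) + I(e; C \mid A, Y),
\]
combined with non-negativity of $I(A; C \mid Y)$, yields $I(e; C \mid A, Y) \leq \epsilon$. Since $I(e; C \mid A) \geq 0$, this immediately gives $I(e; C \mid Y, A) - I(e; C \mid A) \leq \epsilon$, completing the argument.

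The main obstacle I anticipate is the step of swapping $Y$ and $C$ in the conditioning: one needs to recognize that the difference of marginal gains is symmetric in $Y$ and the added variables, which is what makes the conditional independence assumption applicable. Once that identity is in hand, the rest is bookkeeping with the chain rule and non-negativity of mutual information.
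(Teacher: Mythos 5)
Your proof is correct and is essentially the paper's argument: both reduce the submodularity gap $I(e;Y\mid B)-I(e;Y\mid A)$ to chain-rule identities, invoke \cref{as:eCondIndep} once, and discard a non-negative mutual-information term. The only bookkeeping difference is where the assumption is applied — you bound $I(e;\,B\setminus A \mid A,Y)$ via $I(A\cup\{e\};\,B\setminus A \mid Y)\le\epsilon$, whereas the paper expands each marginal gain as $I(e;Y)+I(e;S\mid Y)-I(e;S)$ and uses $I(e;B\mid Y)\le\epsilon$ together with $I(e;B)\ge I(e;A)$; these are the same quantities related by the chain rule, so the two proofs are interchangeable.
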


The above proposition states that if conditional mutual information between input modalities given output is below a certain threshold $\epsilon > 0$, then the utilty function $\util(\cdot) = I(\cdot; Y)$ admits a diminishing gain pattern controlled by $\epsilon$. This diminishing gain pattern is the definition of submodularity (\cref{defi:submodularity}). When conditional mutual information is zero, input modalities are strictly conditional independent, and $I(\cdot; Y)$ is strictly submodular.

\subsection{Modality Selection via Approximate Submodularity}
\label{sec:algorithm}

With \cref{prop:eSubmodularityMI}, we can formulate the problem of modality selection as a submodular function maximization problem with cardinality constraint, i.e., $\max_{S\subseteq V} I(S; Y)$ subject to $|S| \leq q$. Usually, $q$ is considerably smaller than $|V|$. However, \cref{thm:greedyBound} from \citet{nemhauser1978analysis} is applicable to $I(\cdot; Y)$ only if it is strictly submodular. There the approximation guarantee differs in our case because the strength of submodularity of $I(\cdot; Y)$ is controlled by the upper bound of conditional mutual information under \cref{as:eCondIndep}. Under the approximate conditional independence assumption, we show the following result.




\begin{restatable}{thm}{greedyBoundMI}
\label{thm:greedyBoundMI}
Under \cref{as:eCondIndep}, let  $q \in \sZ^+$, and $S_p$ be the solution from \cref{algo:greedyOrginal} at iteration $p$, we have:
\begin{equation}
    I(S_p; Y) \geq (1-e^{-\frac{p}{q}})\max_{S: |S|\leq q}I(S; Y) - q\epsilon
\end{equation}
\end{restatable}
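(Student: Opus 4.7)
The plan is to mimic the classical Nemhauser--Wolsey--Fisher analysis of the greedy algorithm, but with the key submodularity inequality replaced by its $\epsilon$-approximate version from \cref{prop:eSubmodularityMI}, and then to track how the $\epsilon$ slack propagates through the standard recursion. The essential fact I need beyond \cref{prop:eSubmodularityMI} is the monotonicity of $I(\cdot; Y)$ from \cref{prop:montonicCE}, together with $I(\emptyset; Y) = 0$, which will serve as the usual base case.

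Let $S^* \in \arg\max_{S:|S|\le q} I(S; Y)$, and write $S^* \setminus S_i = \{e_1,\dots,e_m\}$ with $m \le q$. The main step is to telescope $I(S_i \cup S^*; Y) - I(S_i; Y)$ along the sequence $T_j = \{e_1,\dots,e_j\}$ and apply approximate submodularity with $A = S_i$ and $B = S_i \cup T_{j-1}$ at each step: this yields
\begin{equation*}
  I(S_i \cup T_j; Y) - I(S_i \cup T_{j-1}; Y) \le I(S_i \cup \{e_j\}; Y) - I(S_i; Y) + \epsilon.
\end{equation*}
Summing over $j = 1,\dots,m$ and using monotonicity to replace the left-hand side's start by $I(S^*;Y) - I(S_i;Y)$, followed by the greedy choice at iteration $i$ (which dominates every individual marginal $I(S_i \cup \{e_j\};Y) - I(S_i;Y)$), I obtain
\begin{equation*}
  I(S^*;Y) - I(S_i;Y) \le q\bigl[I(S_{i+1};Y) - I(S_i;Y)\bigr] + q\epsilon.
\end{equation*}

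Setting $\Delta_i \defeq I(S^*;Y) - I(S_i;Y)$, the display above rearranges to the recursion $\Delta_{i+1} \le (1 - 1/q)\Delta_i + \epsilon$. Unrolling $p$ times from $\Delta_0 \le I(S^*;Y)$ gives
\begin{equation*}
  \Delta_p \le (1 - 1/q)^p I(S^*;Y) + \epsilon\sum_{i=0}^{p-1}(1-1/q)^i \le e^{-p/q}\, I(S^*;Y) + q\epsilon,
\end{equation*}
where I used $(1-1/q)^p \le e^{-p/q}$ and bounded the geometric series by $q$. Rearranging yields the claimed bound.

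The only delicate step is the telescoping: each application of approximate submodularity costs an additive $\epsilon$, and naively iterating the resulting recursion could blow up. The saving is that the geometric series $\sum_{i=0}^{p-1}(1-1/q)^i$ is always bounded by $q$ regardless of $p$, so the accumulated error stays at $q\epsilon$ rather than growing with the number of greedy iterations. I expect this to be the main conceptual point to get right; everything else follows the textbook Nemhauser argument verbatim.
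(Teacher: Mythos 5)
Your proposal is correct and follows essentially the same route as the paper's proof: telescope the gap $I(S^*;Y)-I(S_i;Y)$ over the elements of the optimal set, apply $\epsilon$-approximate submodularity and the greedy choice to get $\delta_{i+1}\le(1-\tfrac{1}{q})\delta_i+\epsilon$, then unroll and bound the geometric series by $q$. The only cosmetic difference is that you telescope over $S^*\setminus S_i$ (at most $q$ terms, padded using nonnegativity of the greedy marginal) while the paper telescopes over all $q$ elements of $S^*$ via the chain rule of mutual information; the resulting recursion and final bound are identical.
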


To summarize, \cref{thm:greedyBoundMI} states that any subset of selected modalities produced by \cref{algo:greedyOrginal} has an approximation guarantee, in the setting of classification with cross-entropy loss. Since $I(\cdot; Y)$ is monotonically non-decreasing, we can run \cref{algo:greedyOrginal} for $p=q$ iterations to get the best possible \greedyvalue that is at least $1-\frac{1}{e}$ of the \optimalvalue minus $q\epsilon$. The $q\epsilon$ term characterizes the fact that, if the to-be-optimized function is not always submodular, the upper bound of conditional mutual information $\epsilon$ could cause a larger approximation error as the algorithm runs longer. Nonetheless, when $\epsilon = 0$, our result in \cref{thm:greedyBoundMI} reduces to \cref{thm:greedyBound}.


Using \cref{thm:greedyBoundMI}, we can further obtain a bound on the minimum of expected cross-entropy loss and expected zero-one loss achieved by the \greedyset. Let us first denote \optimalset $\argmax_{S: |S|\leq q}I(S; Y)$ as $S^*$, then:

\begin{restatable}{cor}{greedyLossBound}
\label{cor:greedyLossBound}
Assume conditions in \cref{thm:greedyBoundMI} hold, there exists optimal predictor $h^*(S_p) = \Pr(Y\mid S_p)$ such that
\begin{align}
    \Exp[\ell_{01}(Y, h^*(S_p))] \leq{}& \Exp[\ell_{ce}(Y, h^*(S_p))] \nonumber \\
    \leq{}& H(Y) - (1-e^{-\frac{p}{q}})I(S^*; Y) + q\epsilon
\end{align}
\end{restatable}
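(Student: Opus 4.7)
The plan is to split the displayed two-line inequality into its two parts: the left inequality $\Exp[\ell_{01}(Y, h^*(S_p))] \leq \Exp[\ell_{ce}(Y, h^*(S_p))]$, which is a pointwise loss-comparison for the Bayes predictor, and the right inequality, which follows by rewriting the cross-entropy risk in information-theoretic form and invoking \cref{thm:greedyBoundMI}.

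For the right-hand inequality, I would first put the cross-entropy risk in closed form. With $h^*(S_p) = \Pr(Y \mid S_p)$, conditioning on $S_p$ and marginalizing $Y$ gives the textbook identity
\begin{equation*}
\Exp[\ell_{ce}(Y, h^*(S_p))] \;=\; H(Y \mid S_p) \;=\; H(Y) - I(S_p; Y),
\end{equation*}
where the last step is the definition of mutual information. Substituting the lower bound $I(S_p; Y) \geq (1 - e^{-p/q}) I(S^*; Y) - q\epsilon$ from \cref{thm:greedyBoundMI} and rearranging directly yields $H(Y) - (1 - e^{-p/q}) I(S^*; Y) + q\epsilon$, which is precisely the right-hand side of the claim.

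For the left-hand inequality, I would observe that conditioned on $S_p = s$, the zero-one risk of the Bayes decision rule induced by $h^*(s) = \Pr(Y \mid s)$ equals $\min(\Pr(Y = 1 \mid s), \Pr(Y = 0 \mid s))$, while the cross-entropy risk equals the binary entropy $H_b(\Pr(Y = 1 \mid s))$. The key pointwise fact is then $\min(p, 1-p) \leq H_b(p)$ for all $p \in [0,1]$, which can be verified by a short one-variable calculus argument on $[0, 1/2]$ together with symmetry about $p = 1/2$ (or as a consequence of Fano's inequality specialized to binary variables). Taking expectation over $S_p$ and chaining with the identity above gives $\Exp[\ell_{01}(Y, h^*(S_p))] \leq H(Y \mid S_p) = \Exp[\ell_{ce}(Y, h^*(S_p))]$.

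The main obstacle, if any, is purely bookkeeping: one must pin down the sign and log-base convention used in the cross-entropy definition of \cref{sec:preliminary} so that the identity $\Exp[\ell_{ce}(Y, h^*(S_p))] = H(Y \mid S_p)$ is exact, and one must verify that the predictor $h^*(S_p) = \Pr(Y \mid S_p)$ simultaneously achieves the Bayes zero-one risk (via thresholding at $1/2$) and the minimum cross-entropy risk. Substantively, all the heavy lifting has been done by \cref{thm:greedyBoundMI}, and this corollary is essentially a translation of the mutual-information lower bound into a statement about achievable classification risk.
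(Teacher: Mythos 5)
Your proposal is correct, and for the right-hand inequality it is the same argument as the paper: invoke \cref{prop:crossEntropyInf} to identify the optimal cross-entropy risk of $h^*(S_p)=\Pr(Y\mid S_p)$ with $H(Y\mid S_p)=H(Y)-I(S_p;Y)$, then substitute the bound from \cref{thm:greedyBoundMI}. The only place you diverge is the left-hand inequality. The paper proves it pointwise: it extends the zero-one loss of a probabilistic predictor to $Y\1{\hat Y\leq 0.5}+(1-Y)\1{\hat Y>0.5}$ and claims this is dominated by $\ell_{ce}(Y,\hat Y)$ for every pair $(Y,\hat Y)$ --- a claim that is only valid when the logarithm has base at most $2$ (at $\hat Y=0.5$, $Y=1$ one needs $-\log 0.5\geq 1$), so the log-base bookkeeping you flag is genuinely needed there. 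Your route instead conditions on $S_p=s$, writes the Bayes zero-one risk as $\min(\eta,1-\eta)$ and the cross-entropy risk as the binary entropy $H_b(\eta)$ with $\eta=\Pr(Y=1\mid s)$, and uses the Fano-type inequality $\min(\eta,1-\eta)\leq H_b(\eta)$ before averaging over $S_p$. This is a weaker (in-expectation rather than pointwise) comparison, but it holds for both natural and base-2 logarithms, so it is actually more robust to the paper's loose sign/base conventions; both versions yield exactly the stated corollary, with the same implicit thresholding-at-$1/2$ interpretation of $\ell_{01}$ applied to a probabilistic predictor.
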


\cref{cor:greedyLossBound} shows that the minimum of both losses achieved by $\Pr(Y\mid S_p)$ are no more than the uncertainty of the target output minus the lower bound of our \greedyvalue from \cref{thm:greedyBoundMI}. We can also upper bound the difference in minimum cross-entropy losses achieved by the \greedyset and the \optimalset.

\begin{restatable}{cor}{greedyLossDiff}
\label{cor:greedyLossDiff}
Assume conditions in \cref{thm:greedyBoundMI} hold. There exists optimal predictors $h_1^* = \Pr(Y\mid S_p)$, $h_2^* = \Pr(Y\mid S^*)$ such that
\begin{align}
    \Exp[\ell_{ce}(Y, h_1^*(S_p))] - \Exp[\ell_{ce}(Y, h_2^*(S^*))] \nonumber \\ \leq e^{-\frac{p}{q}}I(S^*; Y) +q\epsilon
\end{align}
\end{restatable}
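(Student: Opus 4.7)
The plan is to reduce the claim to a direct application of \cref{thm:greedyBoundMI} via the standard identification between the minimum expected cross-entropy loss and the conditional entropy. The key observation is that, in the binary classification setting with cross-entropy loss, the Bayes-optimal predictor is precisely $h^*(S) = \Pr(Y\mid S)$, and the minimum achievable expected loss equals $H(Y\mid S)$. This is the same identity already used implicitly in the proofs of \cref{prop:utilityCE} and \cref{cor:greedyLossBound}, so I would simply invoke it.

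First, I would apply this identification to both the greedy set $S_p$ and the optimal set $S^*$, yielding
\begin{equation*}
\Exp[\ell_{ce}(Y, h_1^*(S_p))] = H(Y\mid S_p), \qquad \Exp[\ell_{ce}(Y, h_2^*(S^*))] = H(Y\mid S^*).
\end{equation*}
Next, I would rewrite each conditional entropy using the identity $H(Y\mid S) = H(Y) - I(S; Y)$, so that the $H(Y)$ terms cancel and the difference becomes
\begin{equation*}
\Exp[\ell_{ce}(Y, h_1^*(S_p))] - \Exp[\ell_{ce}(Y, h_2^*(S^*))] = I(S^*; Y) - I(S_p; Y).
\end{equation*}

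Finally, I would plug in \cref{thm:greedyBoundMI}, which gives $I(S_p; Y) \geq (1 - e^{-p/q}) I(S^*; Y) - q\epsilon$ since $S^* = \argmax_{|S|\le q} I(S; Y)$. Substituting yields
\begin{equation*}
I(S^*; Y) - I(S_p; Y) \leq I(S^*; Y) - (1 - e^{-p/q}) I(S^*; Y) + q\epsilon = e^{-p/q} I(S^*; Y) + q\epsilon,
\end{equation*}
which is exactly the claimed bound. There is no real obstacle here: the entire proof is a two-line rearrangement, and the only substantive ingredient beyond \cref{thm:greedyBoundMI} is the well-known fact that the conditional distribution $\Pr(Y\mid S)$ minimizes expected cross-entropy loss with minimum value $H(Y\mid S)$, a fact that can be justified in a sentence via Gibbs' inequality or referenced from the derivations already appearing in the supplementary material.
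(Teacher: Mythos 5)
Your proposal is correct and follows essentially the same route as the paper: both invoke the identification $\inf_h \Exp[\ell_{ce}(Y,h(S))] = H(Y\mid S)$ achieved by $\Pr(Y\mid S)$ (\cref{prop:crossEntropyInf}) and then rearrange the bound of \cref{thm:greedyBoundMI}, with your version merely phrasing the algebra in terms of $I(S^*;Y)-I(S_p;Y)$ instead of conditional entropies. No gaps.
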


This result expresses a guarantee on the maximum loss difference from the \greedyset versus the \optimalset using optimal predictors. Both bounds from \cref{cor:greedyLossBound} and \cref{cor:greedyLossDiff} are paramterized by the duration and constraint ($p$, $q$) of \cref{algo:greedyOrginal}, as well as the approximation error induced by $\epsilon$. As the algorithm attempts to select a larger set of modalities, both bounds become looser.

Overall, under the setting described in \cref{sec:preliminary}, the (approximate) submodularity of the utility function allows us to have a solution in \polytime with approximation guarantee for modality selection under cardinality constraint. Under this theoretical formulation, we can directly extend results of other submodular optimization problems to solve modality selection problems with different constraints and objectives \citep{wolsey1982analysis, krause2014submodular}.

\subsection{Modality Importance}
\label{sec:feature}

We also examine the possibility of adapting feature importance scores to the context of modality selection, by using them to rank individual modalities. Specifically, we consider Shapley value and MCI. We will show that both the computations of the exact Shapley value and MCI of a modality set is efficient, if our utility function is used as the underlying evaluation function.
As previously shown, the utility of a modality $\util(\{X_i\}) = I(X_i; Y)$ in the classification with cross-entropy loss setting. To proceed, we first show the following propositions for $I(X_i; Y)$.



\begin{restatable}{prop}{subadditiveMI}
\label{prop:subadditiveMI}
Under \cref{as:eCondIndep}, $I(S; Y)$ is $\epsilon$-approximately sub-additive for any $S\subseteq V$, i.e., $I(S\cup S'; Y) \leq I(S; Y) + I(S'; Y) + \epsilon$.
\end{restatable}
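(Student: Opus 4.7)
The plan is to reduce the claim to a direct application of the chain rule together with Assumption~\ref{as:eCondIndep}. First I would dispose of any overlap between $S$ and $S'$ by writing $S \cup S' = S \cup (S' \setminus S)$, so that the two pieces are disjoint and the conditional independence assumption becomes applicable; since $S' \setminus S \subseteq S'$, monotonicity of $I(\,\cdot\,;Y)$ (Proposition~\ref{prop:montonicCE}) gives $I(S' \setminus S; Y) \leq I(S'; Y)$, so it suffices to prove the inequality in the disjoint case.

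Next, I would apply the chain rule of mutual information,
\begin{equation*}
I(S \cup S'; Y) \;=\; I(S; Y) + I(S'; Y \mid S),
\end{equation*}
which reduces the target inequality to showing $I(S'; Y \mid S) \leq I(S'; Y) + \epsilon$. The key identity I would establish is
\begin{equation*}
I(S'; Y \mid S) - I(S'; Y) \;=\; I(S; S' \mid Y) - I(S; S'),
\end{equation*}
obtained by expanding both conditional mutual informations in terms of entropies ($H(S'\mid S) - H(S'\mid S,Y)$ and $H(S') - H(S'\mid Y)$) and rearranging. Once this identity is in place, the bound follows immediately from nonnegativity of mutual information, $I(S; S') \geq 0$, together with Assumption~\ref{as:eCondIndep}, which gives $I(S; S' \mid Y) \leq \epsilon$ under the disjointness we have already arranged.

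The step I expect to require the most care is the entropy manipulation that yields the symmetric identity above, since it needs to be stated cleanly without implicitly re-using conditional independence. Everything else is essentially bookkeeping: the chain rule, monotonicity from Proposition~\ref{prop:montonicCE} to handle overlapping $S, S'$, and a single invocation of Assumption~\ref{as:eCondIndep}. I do not anticipate any additional structural assumptions being needed beyond what is already stated in the proposition.
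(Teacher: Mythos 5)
Your proposal is correct and follows essentially the same route as the paper: chain rule $I(S\cup S';Y)=I(S;Y)+I(S';Y\mid S)$, then the identity $I(S';Y\mid S)-I(S';Y)=I(S;S'\mid Y)-I(S;S')$ (which the paper obtains by expanding $I(S\cup Y;S')$ rather than by entropy bookkeeping), and finally $I(S;S')\geq 0$ together with the $\epsilon$-bound on $I(S;S'\mid Y)$. Your preliminary reduction to disjoint $S,S'$ via monotonicity is a small extra dose of rigor the paper's proof leaves implicit, but the substance of the argument is the same.
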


\textbf{Shapley value.}~~In the classic definition (\cref{defi:shapley}), the complexity of computing the exact Shapley value of a player is exponential. However, because \cref{defi:shapley} involves a summation of the marginal contribution $I(S\cup\{X_i\}; Y) - I(S; Y)$, we can leverage the sub-additivity to provide an upper bound of the Shapley value $\phi_{I, X_i}$ via a summation of $I(X_i; Y)$s for all possible subsets. Analogously, the super-additivity should provide a lower bound of $\phi_{I, X_i}$ again expressed by $I(X_i; Y)$. Putting two bounds together gives us an efficient approximation of $\phi_{I, X_i}$. Nonetheless, for $I(S; Y)$ to be super-additive, variables in $S$ must be marginally independent. Thus, we further introduced \cref{as:eMarginalIndep} for this setting. Although \cref{as:eMarginalIndep} is seemingly stronger than \cref{as:eCondIndep}, it will provide great convenience in approximating the Shapley value of a modality efficiently with a better guarantee parameterized by $\epsilon$, as the following shows.

\begin{restatable}[$\epsilon$-Approximate Marginal Independence]{as}{eMarginalIndep}
\label{as:eMarginalIndep}
There exists a positive constant $\epsilon > 0$ such that, $\forall S, S'\subseteq V, S\cap S' = \emptyset$, we have $I(S; S') \leq \epsilon$.
\end{restatable}

\begin{restatable}{prop}{superadditiveMI}
\label{prop:superadditiveMI}
Under \cref{as:eMarginalIndep}, $I(S; Y)$ is $\epsilon$-approximately super-additive for any $S\subseteq V$, i.e., $I(S\cup S'; Y) \geq I(S; Y) + I(S'; Y) - \epsilon$.
\end{restatable}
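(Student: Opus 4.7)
The plan is to reduce the desired super-additivity inequality to a comparison between two informational quantities that are controlled by Assumption \ref{as:eMarginalIndep}. By the chain rule for mutual information, $I(S \cup S'; Y) = I(S; Y) + I(S'; Y \mid S)$, so it suffices to prove that $I(S'; Y \mid S) \geq I(S'; Y) - \epsilon$, or equivalently that $I(S'; Y) - I(S'; Y \mid S) \leq \epsilon$.

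The key step is to rewrite the left-hand side of this reduced inequality using the symmetry of interaction information. Specifically, I would expand both mutual information terms in entropies and verify the identity
\begin{equation*}
I(S'; Y) - I(S'; Y \mid S) = I(S'; S) - I(S'; S \mid Y),
\end{equation*}
which is a standard consequence of $I(A;B) = H(A) - H(A \mid B)$ and $I(A; B \mid C) = H(A \mid C) - H(A \mid B, C)$. Once this identity is in hand, the conclusion follows immediately: by Assumption \ref{as:eMarginalIndep} applied to the disjoint pair $(S, S')$, we have $I(S'; S) \leq \epsilon$, while $I(S'; S \mid Y) \geq 0$ by non-negativity of conditional mutual information. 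Subtracting gives $I(S'; Y) - I(S'; Y \mid S) \leq \epsilon$, which combined with the chain rule yields $I(S \cup S'; Y) \geq I(S; Y) + I(S'; Y) - \epsilon$.

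I do not anticipate a serious obstacle here, since the argument is essentially algebraic manipulation of mutual informations together with two invocations of standard bounds. The only subtlety worth emphasizing in the write-up is that Assumption \ref{as:eMarginalIndep} is exactly what replaces the stricter strict marginal independence condition $I(S; S') = 0$ that would give exact super-additivity; the slack $\epsilon$ in the assumption translates transparently into the slack $\epsilon$ in the conclusion. It is also worth pointing out the parallel with the proof sketch of Proposition \ref{prop:subadditiveMI}, where approximate conditional independence (\cref{as:eCondIndep}) plays the analogous role via $I(S; S' \mid Y) \leq \epsilon$, and which illustrates that sub- and super-additivity here arise from symmetric manipulations of the same interaction-information identity.
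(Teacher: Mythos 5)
Your proof is correct and follows essentially the same route as the paper: the chain rule plus the interaction-information identity you invoke is exactly the paper's decomposition $I(S\cup S'; Y) = I(S;Y) + I(S';Y) + I(S;S'\mid Y) - I(S;S')$, after which both arguments bound $I(S;S') \leq \epsilon$ via \cref{as:eMarginalIndep} and drop the non-negative term $I(S;S'\mid Y)$. No gaps; your remark on the parallel with \cref{prop:subadditiveMI} matches how the paper itself derives the two results from the same identity.
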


\begin{restatable}{prop}{shapleyMI}
\label{prop:shapleyMI}
If conditions in \cref{prop:subadditiveMI} and \cref{prop:superadditiveMI} hold, we have $ I(X_i; Y) - \epsilon \leq \phi_{I, X_i} \leq I(X_i; Y) + \epsilon$ for any $X_i \in V$.
\end{restatable}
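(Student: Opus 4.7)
The plan is to bound each marginal contribution $I(S \cup \{X_i\}; Y) - I(S; Y)$ appearing inside the Shapley-value sum uniformly, and then use the fact that the Shapley coefficients form a probability distribution on subsets of $V \setminus \{X_i\}$.

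First, I would apply \cref{prop:subadditiveMI} with $S' = \{X_i\}$ to obtain the upper bound on the per-subset marginal gain: for every $S \subseteq V \setminus \{X_i\}$,
\begin{equation*}
    I(S \cup \{X_i\}; Y) - I(S; Y) \leq I(X_i; Y) + \epsilon.
\end{equation*}
Symmetrically, I would apply \cref{prop:superadditiveMI} with $S' = \{X_i\}$ (noting that the hypothesis of that proposition, \cref{as:eMarginalIndep}, is among the assumed conditions) to get
\begin{equation*}
    I(S \cup \{X_i\}; Y) - I(S; Y) \geq I(X_i; Y) - \epsilon.
\end{equation*}

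Next, I would recall the combinatorial identity that Shapley weights form a probability distribution, namely
\begin{equation*}
    \sum_{S \subseteq V \setminus \{X_i\}} \frac{|S|!\,(|V|-|S|-1)!}{|V|!} = 1,
\end{equation*}
which follows by grouping subsets by their cardinality $s$ (there are $\binom{|V|-1}{s}$ of them) and simplifying. Plugging the two-sided bound on the marginal contributions into \cref{defi:shapley} with $v = I(\,\cdot\,;Y)$ and $F = V$, and pulling the uniform bounds $I(X_i; Y) \pm \epsilon$ out of the sum via this identity, immediately yields
\begin{equation*}
    I(X_i; Y) - \epsilon \;\leq\; \phi_{I, X_i} \;\leq\; I(X_i; Y) + \epsilon.
\end{equation*}

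There is no real obstacle here: the only subtlety is making sure that both \cref{as:eCondIndep} and \cref{as:eMarginalIndep} are in force so that sub-additivity and super-additivity apply with the \emph{same} constant $\epsilon$, which is how the statement is phrased. The proof is essentially a one-line application of the additivity bounds combined with the normalization of Shapley weights.
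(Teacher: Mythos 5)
Your proposal is correct and matches the paper's own argument: the paper likewise bounds each marginal contribution $I(S\cup\{X_i\};Y)-I(S;Y)$ uniformly by $I(X_i;Y)\pm\epsilon$ via the sub- and super-additivity propositions, and then uses the normalization of the Shapley weights (summing the coefficients by subset cardinality to get $1$) to pull the constant bounds out of the sum. No gaps; the reasoning is essentially identical to the paper's proof.
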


If \cref{prop:subadditiveMI} holds, the Shapley value of any modality $X_i \in V$ will be upper bounded by its own prediction utility plus $\epsilon$, i.e., $\phi_{I, X_i} \leq I(X_i; Y) + \epsilon$. On the other hand, we can further lower bound the Shapley value if  \cref{prop:superadditiveMI} also holds, $I(X_i; Y) + \epsilon \leq \phi_{I, X_i}$. In both bounds, $I(S\cup\{X_i\}; Y) - I(S; Y)$ becomes $I(X_i; Y)$, and the summation of all fraction factors in fact equals to 1. If both \cref{prop:subadditiveMI} and \cref{prop:superadditiveMI} hold with $\epsilon = 0$, $I(\cdot; Y)$ is additive, in which case, the Shapley value of a modality is exactly its prediction utility, i.e., $\phi_{I, X_i} = I(X_i; Y)$. Furthermore, by the efficiency property of the Shapley value, we must have $I(V; Y) = \sum_{X_i \in V} \phi_{I, X_i}$.


\textbf{MCI.}~~As claimed by \citet{catav2021marginal}, MCI has an extra benefit over Shapley value (\cref{sec:preliminary:feature}). By its definition, solving MCI of a feature requires $\mathcal{O}(2^{|F|})$, where $|F|$ is the total number of features. But if the evaluation function of MCI is submodular, we can efficiently compute the exact MCI. Using \cref{prop:eSubmodularityMI}, we have the following result.
\begin{restatable}{prop}{efficientMCI}
\label{prop:efficientMCI}
Under \cref{as:eCondIndep}, $\forall X_i \in V$, we have $I(X_i; Y) \leq \phi_{I, X_i}^{mci} \leq I(X_i; Y) + \epsilon$.
\end{restatable}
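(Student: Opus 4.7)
The plan is to bound $\phi_{I,X_i}^{mci} = \max_{S \subseteq V \setminus \{X_i\}} \bigl(I(S \cup \{X_i\}; Y) - I(S; Y)\bigr)$ from above and below using the two simplest ingredients available: the definition of MCI as a maximum over marginal gains, and the $\epsilon$-approximate submodularity of $I(\cdot; Y)$ established in \cref{prop:eSubmodularityMI}.

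For the lower bound, I would simply instantiate the maximization at the specific subset $S = \emptyset$. This gives $\phi_{I, X_i}^{mci} \geq I(\{X_i\}; Y) - I(\emptyset; Y) = I(X_i; Y)$, using $I(\emptyset; Y) = 0$. This step is essentially immediate from the definition of MCI as a supremum over $S$.

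For the upper bound, I would invoke $\epsilon$-approximate submodularity (\cref{prop:eSubmodularityMI}) applied with $A = \emptyset$, $B = S$ (any subset of $V \setminus \{X_i\}$), and $e = X_i$. This yields
\begin{equation*}
I(S \cup \{X_i\}; Y) - I(S; Y) \leq I(\{X_i\}; Y) - I(\emptyset; Y) + \epsilon = I(X_i; Y) + \epsilon,
\end{equation*}
for every admissible $S$. Taking the maximum over $S \subseteq V \setminus \{X_i\}$ on the left-hand side then gives $\phi_{I, X_i}^{mci} \leq I(X_i; Y) + \epsilon$, which completes the argument.

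There is essentially no main obstacle here: the proposition is a direct corollary of the diminishing-returns inequality from \cref{prop:eSubmodularityMI}. The only thing to be slightly careful about is noting that MCI is defined over subsets $S$ that do not contain $X_i$ (so that $\{X_i\} \notin B$ in the application of approximate submodularity), and that $I(\emptyset; Y) = 0$ so the baseline term vanishes cleanly on both sides. The efficiency claim implicit in the proposition — that exact MCI can now be read off without enumerating $2^{|V|}$ subsets — follows because the upper and lower bounds pin $\phi_{I,X_i}^{mci}$ into an interval of width $\epsilon$ computable from a single mutual information $I(X_i; Y)$.
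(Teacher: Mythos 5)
Your proof is correct and follows essentially the same route as the paper: the upper bound comes from applying $\epsilon$-approximate submodularity (\cref{prop:eSubmodularityMI}) with $A=\emptyset$, $B=S$, $e=X_i$ and maximizing over $S$, and the lower bound from evaluating the maximum at $S=\emptyset$ (the paper phrases this via a slightly redundant case split on the argmax, but the idea is identical). The only cosmetic difference is that the paper's MCI definition takes the maximum over all $S\subseteq V$ rather than $S\subseteq V\setminus\{X_i\}$; sets containing $X_i$ contribute zero marginal gain, so neither bound is affected.
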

If $\epsilon = 0$, $I(S; Y)$ will be strictly submodular for any $S\subseteq V$, and the MCI of a modality is exactly its prediction utility, i.e., $\phi_{I, X_i}^{mci} = I(X_i; Y)$. If \cref{prop:subadditiveMI} holds with $\epsilon=0$, $I(\cdot; Y)$ is sub-additive, then $I(V; Y) \leq \sum_{X_i \in V} I(X_i; Y) = \sum_{X_i \in V} \phi_{I, X_i}^{mci} $. If \cref{prop:superadditiveMI} further holds with $\epsilon=0$, then $I(\cdot; Y)$ is additive, we can obtain an efficiency property of the MCI in this problem setting, i.e., $I(V; Y) = \sum_{X_i \in V} \phi_{I, X_i}^{mci}$.

\textbf{Modality selection via MCI ranking.}~~In light of these properties, we can consider ranking individual modalities by Shapley value or MCI as an alternative for modality selection besides greedy maximization. The ranking algorithm computes the Shapley value or MCI for all modalities, and returns the top-$q$ modalities with maximum scores \wrt a subset size limit $q$. One advantage of this approach is its complexity of $\mathcal{O}(|V|)$, while greedy maximization requires $\mathcal{O}(q|V|)$. As shown above, solving Shapley value efficiently requires additional assumptions to hold (\cref{as:eMarginalIndep}), thus MCI ranking would be more preferable.  

\section{Experiments}

We present empirical evaluation of greedy maximization (\cref{algo:greedyOrginal}) and MCI ranking on three classification datasets. 

\textbf{\pmnist.}~~\pmnist is a semi-synthetic static dataset built upon \mnist~\citep{lecun-mnisthandwrittendigit-2010}. Specifically, we divide each image in the original \mnist into non-overlapping square patches. Each patch location represents a single modality. We construct and experiment on two \pmnist variants, where one variant has 49 patches and each patch is of size $4\times 4$ square pixel, and another has 9 patches and each patch has the side length of 9 or 10 pixels. \pmnist has ten output classes, 50,000 training images, and 10,000 testing images.

\textbf{\pems.}~~\pems is a real-world time-series dataset from UCI (~\citet{Dua:2019}). This dataset represents the traffic occupancy
rate of different freeways of the San Francisco bay area. The classification task is to predict the day of the week. Data is obtained from 963 sensors placed across the bay area, where each sensor represents a single modality. Each sensor has a time series with 144 time steps, which we down-sample to 36 via taking the regional means of size-4 windows. Running \cref{algo:greedyOrginal} requires $\mathcal{O}(q|V|)$ with $|V| = 963$, and each step requires training a new model. To mitigate extensive run-time, we experiment on 45 out of 963 sensors by filtering sensors in line for the same freeway.
There are a total of 440 instances (days), with the train-val-test split being 200, 67, 173 samples.  



\textbf{\mosi.}~~\mosi is a popular real-world benchmark dataset in affective computing and multimodal learning~\citep{zadeh2016mosi}. The task is 3-classes sentiment classification (positive, neutral, negative) from 20 visual and 5 acoustic modalities with temporal features. Specifically, \mosi collects time-series facial action units and phonetic units from short video clips (10-seconds clip sampled at 5Hz rate). Each unit is a modality, and consists of a 50-dimensional feature vector. Training and testing sample size are 1284 and 686 respectively.

\textbf{Independence Assumption Validation}~~We validate the independence conditions (e.g., \cref{as:eCondIndep}) on all datasets by comparing the mean conditional Mutual Information (MI) and the mean marginal MI of disjoint modalities \citep{gao2017estimating}. As shown in \cref{tab:condIndep}, the conditional MI is smaller than the marginal MI for \mnist and \pems. Both conditional and marginal MI are small for \mosi. This implies that modalities should be approximately conditionally independent in these datasets.

\begin{table}[!htp]
\caption{Mean Marginal/Conditional Mutual Information}
\label{tab:condIndep}
\begin{tabular}{crr}
\toprule
\textbf{Dataset} &
\multicolumn{1}{c}{\textbf{Mean Marg. MI}} & \multicolumn{1}{c}{\textbf{Mean Cond. MI}} \\
\midrule
\textbf{Patch-MNIST} & 2.187 & 0.078 \\
\textbf{PEMS-SF} & 0.626 & 0.223 \\
\textbf{CMU-MOSI} & 0.064 & 0.069 \\
\bottomrule
\end{tabular}
\end{table}

\begin{figure*}
    \centering
    \includegraphics[width=\linewidth]{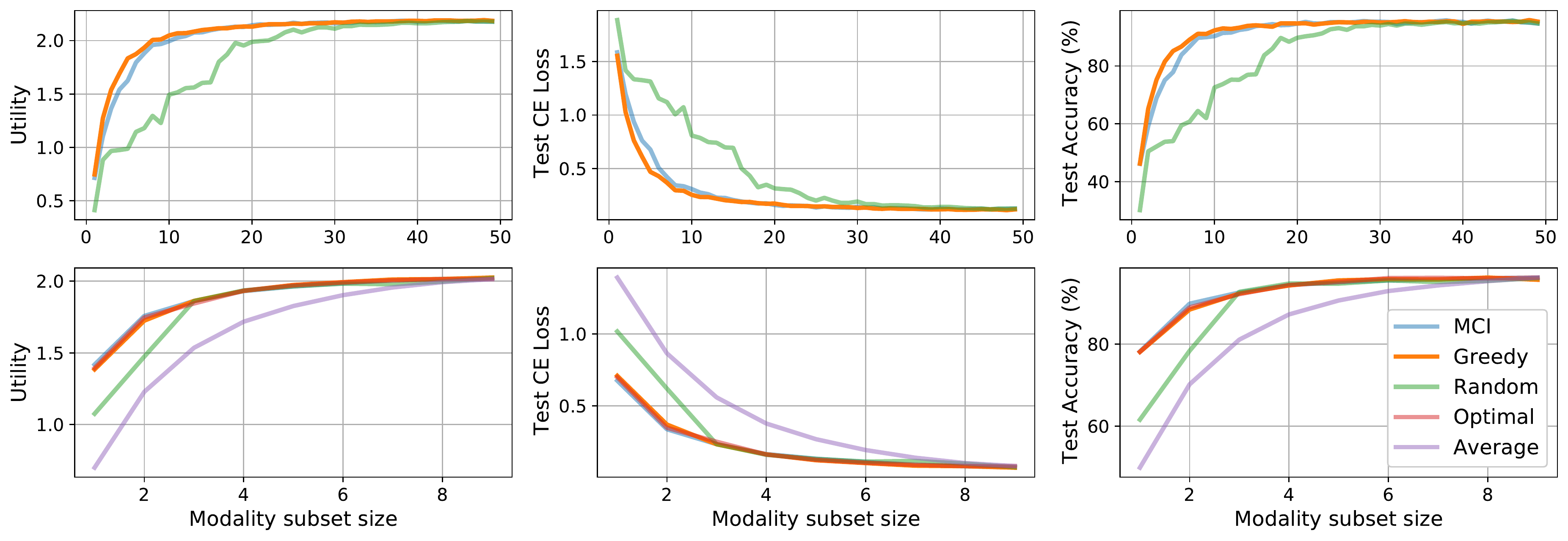}
    \caption{Experiment results for \pmnist with 49 modalities (first row) and with 9 modalities (second row).}
    \label{fig:mnistplot}
\end{figure*}

\subsection{Implementation}
\label{sec:implementation}

We implement greedy maximization based on the pseudo-code in \cref{algo:greedyOrginal}. We implement MCI ranking by computing the MCI for each modality in the full set, and then select the top-ranked modalities with the largest MCIs.

\textbf{Utility estimation.}~~From \cref{prop:utilityCE}, utility $\util(S)$ equals $I(S; Y)$, and $I(S; Y) = H(Y) - H(Y\mid S)$. Based on the variational formulation of the conditional entropy as the minimum cross-entropy, we approximate $H(Y\mid S)$ by using the converged training loss on $S$ to predict $Y$~\citep{farnia2016minimax}. Accordingly, to estimate the marginal gain $I(X_j; Y\mid S_i)$ from \cref{algo:greedyOrginal} over high dimensional data, we compute the difference $H(Y\mid S) - H(Y\mid S\cup \{X_j\})$~\citep{mcallester2020formal}. To compute MCI of each modality $X_j$, we just need to compute $I(X_j; Y)$, according to \cref{prop:efficientMCI}.



\textbf{Modeling.}~~We now describe models for prediction and utility estimation. For \pmnist, we use a convolutional neural network with one convolutional layer, one max pooling layer and two fully-connected layers with ReLU for both estimation and prediction. The network is trained with Adam optimizer on a learning rate of $1e-3$. For \pems, we use a 3-layer neural network with ReLU activation and batch normalization for estimation. This is trained with Adam optimizer on a learning rate of $5e-4$. For prediction, we use a recent a time-series classification pipeline~\citep{rocket} for time-series data processing\comment{which transforms the time series into features using convolutions}, followed by a linear Ridge Classifier~\citep{loning2019sktime}. For \mosi, we experiment with two prediction model types: a linear classifier with Rocket Transformation for time-series (same as the one for \pems); and a plain 3-layer fully-connected neural network with ReLU activation. On each dataset, the number of training epochs are the same for all evaluated approaches across different modality subset sizes. 


\subsection{Experimental Procedures}
\label{sec:experimentprocedure}
In each iteration $i$ of the \cref{algo:greedyOrginal} we execute the following: (1) for each candidate modality $X_j$: (a) train two models on $S$ and $S\cup \{X_j\}$ respectively until training losses converge, (b) take the loss difference to be $I(X_j; Y\mid S_i)$; (2) record test loss and accuracy from the model trained on $S_i \cup \{X^i\}$ before the model over-fits; (3) add selected modality $X^i$ to $S_i$ and go to next iteration. We use model parameters before over-fitting for prediction, and parameters after over-fitting for utility estimation. 

Step (2) for \pems and \mosi are slightly different, in which we record and show the training loss before over-fitting instead of the test loss. This is because \pems and \mosi have a much smaller sample size than \pmnist with potentially noisier features, the model likely will not generalize stably. Thus we first examine \cref{thm:greedyBoundMI} and MCI ranking on a larger sample set which better represents population and not influenced by the generalization gap. Then we analyze with the test accuracy to accounting the generalization.

For \pmnist with 49 modalities, \pems and \mosi, we evaluate \cref{algo:greedyOrginal} and MCI ranking against a randomized baseline at each set size. The randomized baseline randomly selects a modality iteratively. For \pmnist with 9 modalities, we further include optimal and average baselines. At each set size $q$, the optimal baseline is the optimal value from all possible subsets of size $q$, and the average baseline is the average. We only implement the optimal baseline for the 9 modalities case because evaluating on all possible subsets for a larger set is expensive.

\textbf{Training cost.}~~At each iteration of \cref{algo:greedyOrginal}, the marginal utility gain for each candidate modality is evaluated. Since we estimate the conditional mutual information by training a neural network, and we need to evaluate each modality subset at different set sizes, each iteration involves model training. These experiments can be costly for large datasets and models. The training cost at each iteration of Algorithm 1 depends on different utility variants, or mutual information estimation methods in this setting.

\begin{figure}[!htp]
    \centering
    \includegraphics[width=\linewidth]{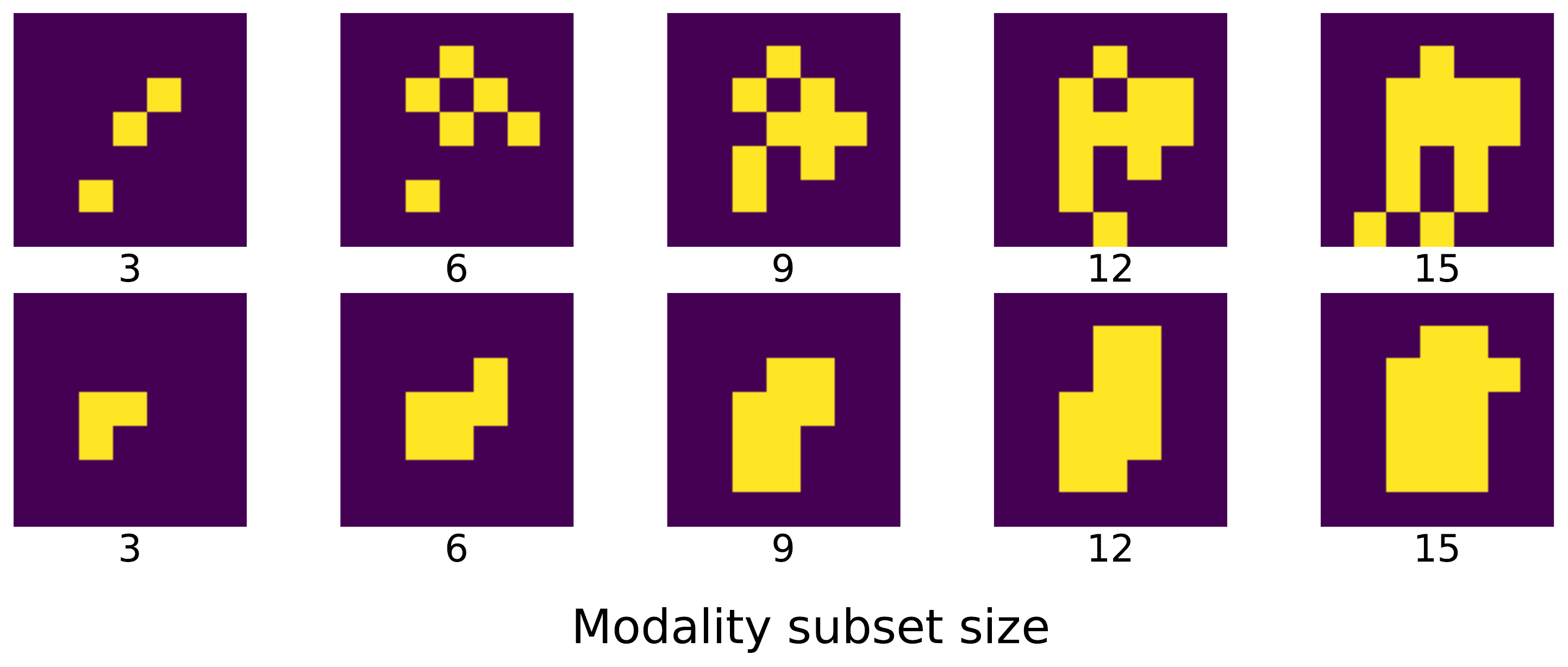}
    \caption{Modality selection paths of \cref{algo:greedyOrginal} (first row) and ranking via MCI (second row) in \pmnist.}
    \label{fig:selection}
\end{figure}

\subsection{Results and Empirical Analysis}



\begin{figure*}
    \centering
    \includegraphics[width=\linewidth]{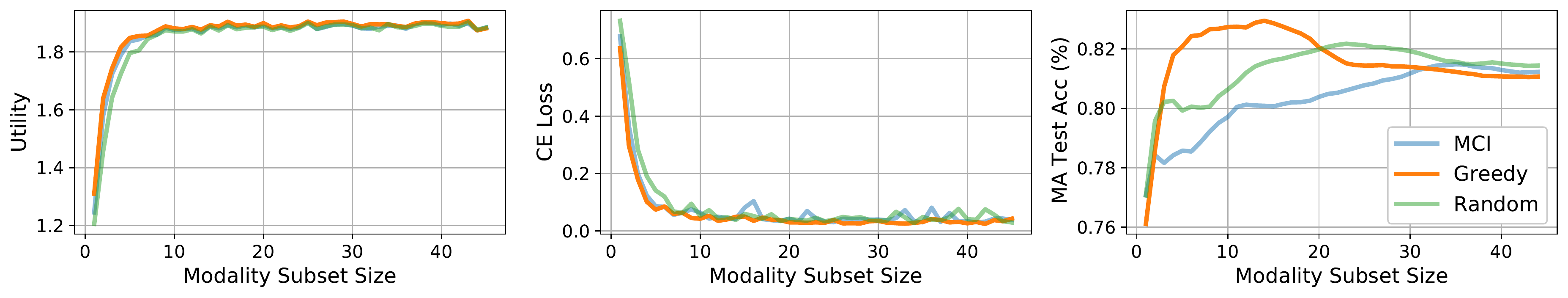}
    \caption{Experiment results for \pems.}
    \label{fig:pems}
\end{figure*}

\begin{figure*}
    \centering
    \includegraphics[width=\linewidth]{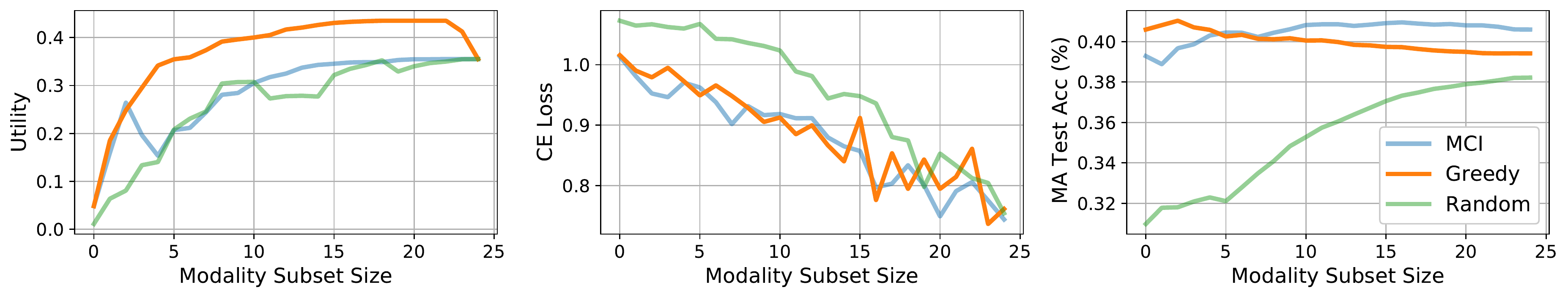}
    \caption{Experiment results for \mosi.}
    \label{fig:mosi}
\end{figure*}

\subsubsection{\pmnist}

\cref{fig:mnistplot} shows the \pmnist experiment results. In this figure, ``Modality subset size'' refers to the size of the selected modality set. ``Utility'' refers to the utility of the selected set. The ``Test CE Loss'' and ``Test Accuracy'' refers to the cross-entropy loss and prediction accuracy on test data from the model that is trained on the selected set.

\textbf{Utility.}~~An immediate observation is the high correlation among the utility, test cross-entropy loss and accuracy in both rows. The trend of test accuracy seems identical to the utility, although they mildly differ when the set size exceeds 30. In addition, the utility and test loss is negatively correlated, matching to \cref{defi:utility}. Utility has a larger upper bound than test loss, potentially because the utility is estimated by converging training loss, which is often reduced in greater magnitude than test loss. The utility has a trend of non-decreasing and diminishing gain, which matches the monotonicity and (approximate) submodularity shown in this setting. Adding more modalities is unnecessary if the subset is already large: in the 49-modalities case, accuracy barely improves after 20 modalities selected; but in 9-modalities case, this pattern is less obvious.




\textbf{Greedy maximization.}~~
\comment{We now examine the performance of the greedy maximization (\cref{algo:greedyOrginal}) against its theoretical guarantee.}
\cref{algo:greedyOrginal} beats random selection in both cases. In \cref{fig:mnistplot} (second row), it beats the average by selecting the modality with maximum utility from the start, and overlaps its trajectory with the optimal. In \cref{fig:mnistplot} (first row), \cref{algo:greedyOrginal} achieves near-maximum utility with only 7 modalities. These results validate the approximate guarantee from \cref{thm:greedyBoundMI}. In fact, the guarantee on utility is empirically much better than theoretically proven.

\textbf{MCI ranking.}~~In the 9-modalities case, MCI ranking is as good as greedy maximization and the optimal baseline when the full set has fewer modalities. When more modalities are available for selection (e.g., 49 modalities), \cref{algo:greedyOrginal} select a subset that minimizes the loss slightly further than the highest ranked modalities when set size below 15.

\textbf{Modality selection path.}~~We plot the modality selection paths from \cref{algo:greedyOrginal} and MCI ranking in \cref{fig:selection}. We can see that MCI selects the modalities that each contain the most information to output -- the center regions. Whereas the modalities selected by \cref{algo:greedyOrginal} are more diverse, covering different spatial locations of the original image, leading to an advantage in gaining more information collectively.

\subsubsection{\pems}

\cref{fig:pems} shows our experiment results on \pems. In \cref{fig:pems}, the two leftmost plots show the utility and cross-entropy loss on the training data. The rightmost plot of \cref{fig:pems} shows the moving average of test accuracy instead, because model was not generalized stably \comment{and the accuracy is volatile}under small sample size. 
\comment{We take the average across 3 trials owing to the randomness of the Rocket transform.(\cref{sec:experimentprocedure}).} 


\textbf{Utility.}~~The difference in utility and loss among \cref{algo:greedyOrginal}, MCI ranking and random baseline are small, and all of them quickly converge to the minimum possible value after selecting only a few modalities. This is potentially because almost each of the modality is sufficient to make training loss small. However greedily selected subsets still has slightly more utility than subsets from MCI ranking and random baseline at every set size. Overall, we still observe the utility is monotone and (approximate) submodular; and \cref{algo:greedyOrginal}'s achieved utility matches \cref{thm:greedyBoundMI}. 


\textbf{Generalization.}~~From the test accuracy plot, we can see a clear advantage from the \greedyset over others when the subset size is small. Meanwhile, MCI ranking is worse than random baseline, which could imply that MCI ranking does not have a robust performance guarantee as \cref{algo:greedyOrginal}.
Other than that, the test accuracy of \cref{algo:greedyOrginal} gradually decreases as more modalities are added. This is inline with the over-fitting artifact of greedy feature selection from \citet{blanchet2008forward}. However, in the regime of good generalization, greedy maximization should preserve the performance guarantee during testing. 


\subsubsection{\mosi}
The results are alike for both prediction model types mentioned in \cref{sec:implementation} for \mosi. Thus we only use \cref{fig:mosi} to show the \mosi evaluation results from the 3-layer fully-connected neural network. In \cref{fig:mosi}, the two leftmost plots show the utility and cross-entropy loss on the training data. The rightmost plot of \cref{fig:mosi} shows the moving average of test accuracy since the model lacks the capacity to generalize well for this dataset \comment{and the accuracy is volatile}under small sample size. 

Overall, many previous observations from other datasets still hold for \mosi. For example, the utility curve is approximately submodular and monotone as number of selected modalities increases. Modalities selected by \cref{algo:greedyOrginal} and MCI ranking outperform randomly selected modalities by having more utility, lower training loss, higher testing accuracy, especially when the number of modalities is still small. On the other hand, potentially due to the simplicity of the model and noisy features, we are unable observe an increase of testing accuracy as more modalities are included in \cref{algo:greedyOrginal} and MCI ranking.

\section{Related Work}
\label{sec:related}

\textbf{\mmcap Learning}~~
\mmcap learning is a vital research area with many applications~\citep{liu2017facial, pittermann2010emotion, frantzidis2010classification}.
Theoretically, \citet{huang2021makes} showed that learning with more modalities achieves a smaller population risk, and this marginal benefit towards prediction could be upper bounded. However, the existing measure of marginal benefit~\citep{huang2021makes} is hard to understand and cannot be easily estimated, and it does not provide further insight on the emerging modality selection problem.

\textbf{Submodular Optimization}~~
Thanks to the benign property of submodularity, many subset selection problems, which are otherwise intractable, now admit efficient approximate solutions~\citep{fujishige2005submodular,iwata2008submodular,krause2014submodular}. The first study of greedy algorithm over submodular set function dates back to~\citet{nemhauser1978analysis}. Since then, submodular optimization has been widely applied to diverse domains such as machine learning \citep{wei2015submodularity}
, distributed computing,
and social network analysis \citep{zhuang2013influence}.  
A typical type of problem is submodular maximization, which can be subject to a variety of constraints such as cardinality, matroid, or knapsack constraints (\cite{lee2010submodular,iyer2013submodular}). 
In our case, we extended results from \citet{nemhauser1978analysis} to the case of approximate submodularity of mutual information in a \mm learning setting.

\textbf{Feature Selection}~~Feature selection asks to find a feature subset that can speed up learning, improve prediction and provide better interpretability to the data/model~\citep{li2017feature,chandrashekar2014survey}. 
Here we briefly touch related work on feature selection more relevant to our context. Information-theoretic measures such as mutual information have been as a metric for feature selection~\citep{brown2012conditional,fleuret2004fast,chen2018learning}. For example, \citet{brown2012conditional} presents a unified information-theoretic feature selection framework via conditional likelihood maximisation. There are also work on feature selection in regression problems through submodular optimization~\citep{das2011submodular}. 
In our context, a distinction between the problems of modality selection and feature selection are the assumptions of the underlying data (\cref{sec:preliminary}).

\section{Conclusion}
In this paper, we formulate a theoretical framework for optimizing modality selection in \mm learning. In this framework, we propose a general utility function that quantify the impact of a modality towards prediction, and identify proper assumption(s) suitable for \mm learning. In the case of binary classification under cross-entropy loss, we show the utility function conveniently manifests as Shannon mutual information, and preserves approximate submodularity that allows simple yet efficient modality selection algorithms with approximation guarantee. We also connects modality selection to feature importance scores by showing the computation advantages of using Shaply value and MCI to rank modality importance. Lastly, we evaluated our results on a semi-synthetic dataset \pmnist, and two real-world datasets \pems and \mosi.



\bibliography{citation}

\appendix

\onecolumn

\section{Preliminary for Missing Proofs}

\begin{restatable}{prop}{crossEntropyInf}
\label{prop:crossEntropyInf}
Let $X$, $Y \in \{0, 1\}$ be random variables, $\fH$ be the class of functions of $X$ such that $\forall h \in \fH, h(X) \in [0, 1]$, and $\ell(\cdot, \cdot)$ be the cross-entropy loss. We have:
\begin{equation}
    \inf_{h\in \fH} \Exp[\ell(Y, h(X))] = H(Y\mid X)
\end{equation}
\end{restatable}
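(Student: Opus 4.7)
The plan is to rewrite the expected cross-entropy loss in a form that exposes a Kullback-Leibler divergence between the true conditional distribution $P(Y\mid X)$ and the predictive Bernoulli distribution induced by $h$, and then to use non-negativity of KL divergence together with the Bayes-optimal choice $h^*(x) = P(Y=1\mid X=x)$.

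Concretely, I would first identify each $h\in\fH$ with a predictive conditional law by setting $q_h(y\mid x) \defeq h(x)^y(1-h(x))^{1-y}$ for $y\in\{0,1\}$; then the loss can be written as $\ell(Y,h(X)) = -\log q_h(Y\mid X)$ (reading $\ell$ in the standard sign convention so that it is non-negative, consistent with $H(Y\mid X)\geq 0$). Applying the tower rule,
$$\Exp[\ell(Y, h(X))] \;=\; \Exp_X\!\left[\Exp_{Y\mid X}[-\log q_h(Y\mid X)]\right].$$
Next, I would add and subtract $\log p(Y\mid X)$ inside the inner expectation to split the integrand into a conditional-entropy part and a KL part:
$$\Exp_{Y\mid X=x}[-\log q_h(Y\mid X=x)] \;=\; H(Y\mid X=x) + \KLD{p(\cdot\mid X=x)}{q_h(\cdot\mid X=x)}.$$
Taking expectation over $X$ yields
$$\Exp[\ell(Y, h(X))] \;=\; H(Y\mid X) + \Exp_X\!\left[\KLD{p(\cdot\mid X)}{q_h(\cdot\mid X)}\right].$$

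From here the argument is in two halves. For the lower bound, non-negativity of KL gives $\Exp[\ell(Y,h(X))]\geq H(Y\mid X)$ for every $h\in\fH$, hence $\inf_{h\in\fH}\Exp[\ell(Y,h(X))]\geq H(Y\mid X)$. For the matching upper bound, I would pick $h^*(x) \defeq P(Y=1\mid X=x)$, which is a measurable $[0,1]$-valued function of $X$ and therefore belongs to $\fH$; then $q_{h^*}(\cdot\mid x) = p(\cdot\mid x)$, the KL term vanishes pointwise in $x$, and the infimum is attained with value $H(Y\mid X)$.

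The routine calculations are the Gibbs/information-inequality style manipulation, which I would not belabor. The main obstacle is mostly a bookkeeping one: the loss in the preliminaries is written as $\1{Y=1}\log\hat Y+\1{Y=0}\log(1-\hat Y)$, and one must treat it with the conventional minus sign so that $\inf$ (rather than $\sup$) equals $H(Y\mid X)$; this is the same convention already used implicitly in \cref{prop:utilityCE} to yield $\util(S)=I(S;Y)$. A secondary point worth noting explicitly is that the class $\fH$ must be rich enough to contain the Bayes regressor $h^*$; this is the standard implicit assumption behind variational characterizations of conditional entropy and is the only requirement needed for the infimum to actually be achieved rather than merely approached.
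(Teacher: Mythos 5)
Your proof is correct and matches the paper's own argument: both decompose $\Exp[\ell(Y,h(X))]$ into $H(Y\mid X)$ plus an expected KL divergence between $\Pr(Y\mid X)$ and the Bernoulli law induced by $h$, then conclude via non-negativity of KL and the choice $h^*(X)=\Pr(Y=1\mid X)$. Your explicit remarks on the sign convention of $\ell$ and on $\fH$ containing the Bayes regressor are points the paper leaves implicit, but the route is the same.
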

\begin{proof}
Let $x, \hat y$ be the instantiation of $X, \hat Y$ respectively, where $\hat Y \coloneqq h(X)$. $\1{\cdot}$ denotes the indicator function, and $\KLD{\cdot}{\cdot}$ denotes the Kullback–Leibler divergence.
\begin{align}
    \Exp_{\fD}[\ell(Y, h(X))] ={}& \Exp_{X, Y}[ - \1{Y=1}\log\hat{Y} - \1{Y=0}\log(1-\hat{Y})] \\
    ={}& - \Exp_X[\Exp_{Y\mid x}[\1{Y=1}\log\hat{y} + \1{Y=0}\log(1-\hat{y})]] \\
    ={}& - \Exp_X[\Pr(Y=1\mid x)\log\hat{y} + \Pr(Y=0\mid x)\log(1-\hat{y})] \\
    ={}& \Exp_X[\Pr(Y=1\mid x)\log\frac{1}{\hat{y}} + \Pr(Y=0\mid x)\log\frac{1}{1-\hat{y}}] \\
    ={}& \Exp_X[\Pr(Y=1\mid x)\log\frac{\Pr(Y=1\mid x)}{\hat{y}} + \Pr(Y=0\mid x)\log\frac{\Pr(Y=0\mid x)}{1-\hat{y}}] \\ 
    & + \Exp_X[- \Pr(Y=1\mid x)\log\Pr(Y=1\mid x) - \Pr(Y=0\mid x)\log\Pr(Y=0\mid x)] \\
    ={}& \Exp_X[\KLD{\Pr(Y\mid x)}{h(x)}] + \Exp_X[H(Y\mid x)] \\
    ={}& \KLD{\Pr(Y\mid X)}{h(X)} + H(Y\mid X)
\end{align}
Since $H(Y\mid X) \geq 0$ and is unrelated to $h(X)$, $\Exp_{\fD}[\ell(Y, h(X))]$ is minimum when $h(X) = \Pr(Y\mid X)$. 
\end{proof}

\section{Missing Proofs}

\begin{restatable}{prop}{utilityCE}
Given $Y \in \{0, 1\}$ and $\ell(Y, \hat Y) \coloneqq \1{Y=1}\log\hat{Y} + \1{Y=0}\log(1-\hat{Y})$, $\util(S) = I(S; Y)$.
\end{restatable}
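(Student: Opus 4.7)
The plan is to leverage the preliminary \cref{prop:crossEntropyInf}, which identifies the infimum of the expected cross-entropy loss with a conditional entropy, and then recognize the resulting expression as Shannon mutual information.

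First I would handle the second term in \cref{defi:utility}. Applying \cref{prop:crossEntropyInf} directly with $X$ replaced by the (jointly distributed) random vector $S$ and $\fH$ being the class of predictors from $S$ into $[0,1]$, I obtain
\begin{equation*}
\inf_{h\in\fH} \Exp[\ell(Y, h(S))] = H(Y\mid S).
\end{equation*}
The argument carries over verbatim because the proof of \cref{prop:crossEntropyInf} only uses the pointwise minimization over $\hat y\in[0,1]$ of $-\Pr(Y=1\mid x)\log\hat y - \Pr(Y=0\mid x)\log(1-\hat y)$ conditioned on each value of the conditioning variable, and this minimization is insensitive to whether the conditioning variable is scalar or a tuple of modalities.

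Next I would address the first term. The infimum $\inf_{c\in\fY}\Exp[\ell(Y,c)]$ is obtained by restricting the hypothesis class to constant functions. Equivalently, this is the special case of \cref{prop:crossEntropyInf} where the conditioning variable is trivial (a constant), in which case the conditional entropy reduces to $H(Y)$. Concretely, the same pointwise calculation yields $\Exp[\ell(Y,c)] = \KLD{\Pr(Y)}{c} + H(Y)$ (with $c$ interpreted as a Bernoulli parameter), and the KL term vanishes at $c = \Pr(Y=1)$, giving
\begin{equation*}
\inf_{c\in\fY}\Exp[\ell(Y,c)] = H(Y).
\end{equation*}

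Finally I combine the two pieces:
\begin{equation*}
\util(S) = H(Y) - H(Y\mid S) = I(S; Y),
\end{equation*}
where the last equality is the standard definition of Shannon mutual information. There is no real obstacle in this proof; the only subtlety is to make explicit that the constant-prediction infimum corresponds to the unconditional entropy (i.e., the degenerate conditioning case of \cref{prop:crossEntropyInf}), so that both terms are expressed as entropies and subtract cleanly into a mutual information.
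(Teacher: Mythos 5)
Your proposal is correct and follows essentially the same route as the paper: both terms of \cref{defi:utility} are evaluated via \cref{prop:crossEntropyInf} (the conditional term giving $H(Y\mid S)$ and the constant-prediction term giving $H(Y)$), and the difference is recognized as $I(S;Y)$. Your treatment of the first term is just a more explicit spelling-out of the paper's terse step $H(Y\mid c)=H(Y)$.
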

\begin{proof}
By \cref{defi:utility} and \cref{prop:crossEntropyInf}, we have:
\begin{align}
    \util(S) ={}& \inf_{h\in\fH} \Exp[\ell(Y, c)]-\inf_{h\in\fH} \Exp[\ell(Y, h(S))] \\
    ={}& H(Y\mid c) - H(Y\mid S) \\
    ={}& H(Y) - H(Y\mid S) \\
    ={}& I(S; Y)
\end{align}
\end{proof}

\begin{restatable}{prop}{montonicCE}
$\forall M \subseteq N \subseteq V$, $I(N; Y) - I(M; Y) = I(N \setminus M; Y \mid M) \geq 0$.
\end{restatable}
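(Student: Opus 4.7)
The plan is to apply the chain rule for mutual information, which is the natural tool whenever we need to decompose $I(N;Y)$ into contributions from a subset and its complement. Since $M \subseteq N$, we have a clean decomposition $N = M \cup (N \setminus M)$ with $M \cap (N \setminus M) = \emptyset$, so the chain rule gives
\begin{equation*}
I(N;Y) = I(M \cup (N\setminus M); Y) = I(M;Y) + I(N \setminus M; Y \mid M).
\end{equation*}
Rearranging yields the claimed identity $I(N;Y) - I(M;Y) = I(N \setminus M; Y \mid M)$.

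For the inequality, I would invoke the standard fact that conditional mutual information is non-negative. This is immediate from writing $I(N \setminus M; Y \mid M) = \mathbb{E}_{M}\bigl[D_{\mathrm{KL}}(P_{(N\setminus M), Y \mid M} \,\|\, P_{N\setminus M \mid M} \otimes P_{Y \mid M})\bigr]$ and using the non-negativity of KL divergence (Gibbs' inequality). Equivalently, one can note $I(N\setminus M; Y \mid M) = H(Y\mid M) - H(Y\mid N) \geq 0$, since conditioning on additional variables cannot increase entropy.

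There is no genuine obstacle here: the result is essentially a direct restatement of the chain rule combined with non-negativity of conditional mutual information. The only thing to be careful about is that one of $M$ or $N \setminus M$ could be empty (e.g., when $M = \emptyset$ the identity reduces to $I(N;Y) = I(N;Y\mid \emptyset)$, and when $M = N$ both sides vanish), but these boundary cases are handled by the standard convention that $I(\emptyset; Y \mid \cdot) = 0$, so no separate argument is needed.
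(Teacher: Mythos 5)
Your proof is correct and follows essentially the same route as the paper's: both arguments are instances of the chain rule for mutual information combined with non-negativity of conditional mutual information, the only difference being that you apply the chain rule once at the block level $N = M \cup (N\setminus M)$ while the paper expands both $I(N;Y)$ and $I(M;Y)$ term-by-term over individual modalities and cancels. Your explicit justification of $I(N\setminus M; Y\mid M) \geq 0$ and the remark on the empty-set boundary cases are fine additions but do not change the substance of the argument.
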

\begin{proof}
Let $N \coloneqq \{X_1,..., X_n\}$, $M \coloneqq \{X_1,..., X_m\}$, $n \geq m$.
\begin{align}
    I(N; Y) - I(M; Y) ={}& \sum_{i=1}^n I(X_i; Y \mid X_{i-1}, ..., X_1) - \sum_{i=1}^m I(X_i; Y \mid X_{i-1}, ..., X_1) \\
    ={}& \sum_{i=m+1}^n I(X_i;Y \mid X_{i-1}, ..., X_1) \\
    ={}& I(N \setminus M; Y\mid M) \\
    \geq {}& 0
\end{align}
\end{proof}

\begin{restatable}{prop}{eSubmodularityMI}
Under \cref{as:eCondIndep}, $ I(S; Y)$ is $\epsilon$-approximately submodular, i.e., $\forall A \subseteq B \subseteq V$, $e \in V \setminus B$, $I(A \cup \{e\}; Y) - I(A; Y) + \epsilon \geq I(B \cup \{e\}; Y) - I(B; Y)$.
\end{restatable}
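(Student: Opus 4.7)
The plan is to rewrite both sides of the desired inequality as conditional mutual informations via the chain rule, then bound the resulting gap using \cref{as:eCondIndep}. Concretely, for any $S \subseteq V$ and $e \in V \setminus S$, we have $I(S \cup \{e\}; Y) - I(S; Y) = I(e; Y \mid S)$ by the chain rule for mutual information. Hence, the claim reduces to showing
\begin{equation*}
I(e; Y \mid A) + \epsilon \;\geq\; I(e; Y \mid B).
\end{equation*}

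Let $C \vcentcolon= B \setminus A$, so $B = A \cup C$ with $A \cap C = \emptyset$. The main identity I would use is obtained by expanding $I(e; Y, C \mid A)$ in two different orders via the chain rule:
\begin{align*}
I(e; Y, C \mid A) &= I(e; Y \mid A) + I(e; C \mid Y, A) \\
&= I(e; C \mid A) + I(e; Y \mid A, C).
\end{align*}
Subtracting and recalling that $A \cup C = B$ yields the identity
\begin{equation*}
I(e; Y \mid A) - I(e; Y \mid B) \;=\; I(e; C \mid A) - I(e; C \mid Y, A).
\end{equation*}
Since $I(e; C \mid A) \geq 0$, the right-hand side is at least $-I(e; C \mid Y, A)$, so it suffices to show $I(e; C \mid Y, A) \leq \epsilon$.

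For this last step I would invoke \cref{as:eCondIndep}. Since $\{e\}$ and $B = A \cup C$ are disjoint subsets of $V$, the assumption gives $I(\{e\}; A \cup C \mid Y) \leq \epsilon$. Applying the chain rule one more time,
\begin{equation*}
I(e; A \cup C \mid Y) \;=\; I(e; A \mid Y) + I(e; C \mid Y, A) \;\geq\; I(e; C \mid Y, A),
\end{equation*}
by non-negativity of conditional mutual information. Chaining the two displays yields $I(e; C \mid Y, A) \leq \epsilon$, and combining with the identity above gives the desired bound.

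The argument is essentially a careful bookkeeping exercise with the chain rule, and I do not expect a real obstacle; the only subtlety is choosing to expand $I(e; Y, C \mid A)$ in exactly the two orders above so that the unwanted quantities cancel and the residual term matches the conditional independence assumption. Matching the disjointness side condition in \cref{as:eCondIndep} (applied to $\{e\}$ and $A \cup C$, which are disjoint because $e \notin B$) is the only place one must be slightly careful.
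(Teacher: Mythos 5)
Your proof is correct and follows essentially the same route as the paper's: both rewrite the marginal gains via the chain rule and then bound the offending conditional-dependence term by $\epsilon$, using \cref{as:eCondIndep} applied to the disjoint sets $\{e\}$ and $B$ together with non-negativity of (conditional) mutual information. The only cosmetic difference is that you keep everything conditioned on $A$, working with $I(e;C\mid A)\geq 0$ and $I(e;C\mid Y,A)\leq\epsilon$ for $C=B\setminus A$, whereas the paper expands both gains as $I(e;Y)+I(e;\cdot\mid Y)-I(e;\cdot)$ and uses $I(e;B)\geq I(e;A)$ plus $I(e;B\mid Y)\leq\epsilon$; the two decompositions coincide term-by-term under the chain rule.
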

\begin{proof}
For subset $A$, we have:
\begin{align}
    I(A\cup \{e\}; Y) - I(A; Y) ={}& I(\{e\}; Y\mid A) \\
    ={}& I(\{e\}; Y, A) - I(\{e\}; A) \\
    ={}& I(\{e\}; Y) + I(\{e\}; A \mid Y) - I(\{e\}; A)
\end{align}
Similarly, $I(B\cup \{e\}; Y) - I(B; Y) = I(\{e\}; Y) + I(\{e\}; B \mid Y) - I(\{e\}; B)$.  Given \cref{as:eCondIndep} holds, we denote $I(\{e\}; A \mid Y) = \epsilon_A$ and $I(\{e\}; B \mid Y) = \epsilon_B$ where $\epsilon_A, \epsilon_B \leq \epsilon$. In the worst case where $\epsilon_A = 0$, absolute submodularity is still satisfied if $\epsilon_B \leq I(\{e\}; B) - I(\{e\}; A)$, i.e.,
\begin{align}
    I(B\cup \{e\}; Y) - I(B; Y) ={}& I(\{e\}; Y) + I(\{e\}; B \mid Y) - I(\{e\}; B) \\
    ={}& I(\{e\}; Y) - I(\{e\}; B) + \epsilon_B \\
    \leq{}& I(\{e\}; Y) - I(\{e\}; B) + I(\{e\}; B) - I(\{e\}; A) = I(A\cup \{e\}; Y) - I(A; Y)
\end{align}

But if $\epsilon_B > I(\{e\}; B) - I(\{e\}; A)$, the submodularity above will not hold. However, because $\epsilon_B \leq \epsilon$ , we can define approximate submodularity characterized by the constant $\epsilon \geq 0$. Specifically:
\begin{align}
    I(B\cup \{e\}; Y) - I(B; Y) ={}& I(\{e\}; Y) + I(\{e\}; B \mid Y) - I(\{e\}; B) \\
    ={}& I(\{e\}; Y) - I(\{e\}; B) + \epsilon_B \\
    \leq{}& I(\{e\}; Y) - I(\{e\}; B) + \epsilon \\
    \leq{}& I(\{e\}; Y) - I(\{e\}; A) + \epsilon \\
    \leq{}& I(\{e\}; Y) - I(\{e\}; A) + \epsilon_A + \epsilon \\
    \leq{}& I(A\cup \{e\}; Y) - I(A; Y) + \epsilon
\end{align}
\end{proof}

\begin{restatable}{thm}{greedyBoundMI}
Under \cref{as:eCondIndep}, let  $q \in \sZ^+$, and $S_p$ be the solution from \cref{algo:greedyOrginal} at iteration $p$, we have:
\begin{equation}
    I(S_p; Y) \geq (1-e^{-\frac{p}{q}})\max_{S: |S|\leq q}I(S; Y) - q\epsilon
\end{equation}
\end{restatable}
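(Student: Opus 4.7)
The plan is to adapt the classical Nemhauser-Wolsey-Fisher analysis to the $\epsilon$-approximate submodular setting established in Proposition \ref{prop:eSubmodularityMI}, and to carefully track how the $\epsilon$ slack accumulates through the argument. Let $S^* \in \argmax_{S:|S|\leq q} I(S;Y)$ and enumerate $S^* = \{X^*_1,\dots,X^*_q\}$ (padding with dummy elements if $|S^*|<q$ or using monotonicity to assume $|S^*|=q$). Let $\delta_i \defeq I(S^*;Y) - I(S_i;Y)$, where $S_i$ is the greedy set at iteration $i$.

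First, I would derive a one-step recursion on $\delta_i$. By monotonicity (Proposition \ref{prop:montonicCE}), $I(S^*;Y) \leq I(S^*\cup S_i;Y)$. Expanding telescopically along the elements of $S^*$ gives
\begin{equation*}
I(S^*\cup S_i;Y) = I(S_i;Y) + \sum_{j=1}^{q}\bigl[I(S_i\cup\{X^*_1,\dots,X^*_j\};Y) - I(S_i\cup\{X^*_1,\dots,X^*_{j-1}\};Y)\bigr].
\end{equation*}
Applying $\epsilon$-approximate submodularity to each term (with $A = S_i$, $B = S_i\cup\{X^*_1,\dots,X^*_{j-1}\}$, $e = X^*_j$), each bracketed marginal gain is at most $I(S_i\cup\{X^*_j\};Y) - I(S_i;Y) + \epsilon$. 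Summing and using the greedy choice $X^i = \argmax_{X\in V\setminus S_i}\bigl(I(S_i\cup\{X\};Y)-I(S_i;Y)\bigr)$, so each term is upper bounded by $I(S_{i+1};Y)-I(S_i;Y)$, yields
\begin{equation*}
I(S^*;Y) - I(S_i;Y) \leq q\bigl[I(S_{i+1};Y) - I(S_i;Y)\bigr] + q\epsilon,
\end{equation*}
which rearranges to the recursion $\delta_{i+1} \leq (1-1/q)\,\delta_i + \epsilon$.

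Next, I would unroll this recursion from $i=0$ to $i=p-1$. The standard computation gives
\begin{equation*}
\delta_p \leq (1-1/q)^p\,\delta_0 + \epsilon\sum_{i=0}^{p-1}(1-1/q)^i \leq e^{-p/q}\,I(S^*;Y) + q\epsilon,
\end{equation*}
using $\delta_0 = I(S^*;Y)$ (since $I(\emptyset;Y)=0$), the bound $(1-1/q)^p \leq e^{-p/q}$, and the geometric-sum bound $\sum_{i=0}^{p-1}(1-1/q)^i \leq q$. Rearranging $\delta_p = I(S^*;Y) - I(S_p;Y)$ yields the claimed inequality.

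The main obstacle is the bookkeeping of the $\epsilon$ slack: approximate submodularity must be invoked $q$ times per iteration (once per element of $S^*$), contributing an additive $q\epsilon$ to each recursion step, and then the recursion is iterated $p$ times, which a priori threatens a $pq\epsilon$ blow-up. The payoff of writing the recursion in the standard $\delta_{i+1}\leq (1-1/q)\delta_i + \epsilon$ form is that the error telescopes through a geometric series bounded by $q$, collapsing the final slack to $q\epsilon$ rather than $pq\epsilon$, which is exactly what matches the theorem statement and recovers Theorem \ref{thm:greedyBound} in the $\epsilon=0$ limit.
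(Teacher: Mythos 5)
Your proposal is correct and follows essentially the same route as the paper's proof: the same telescoping of $I(S^*\cup S_i;Y)$ along the elements of $S^*$, the same application of $\epsilon$-approximate submodularity and the greedy choice to obtain the recursion $\delta_{i+1}\leq(1-\tfrac{1}{q})\delta_i+\epsilon$, and the same geometric-series unrolling with $(1-\tfrac{1}{q})^p\leq e^{-p/q}$ yielding the $q\epsilon$ slack. No gaps beyond those already present in the paper's own argument.
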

\begin{proof}
Let $S^* \coloneqq \max_{S: |S|\leq q}I(S; Y)$ be the optimal subset with cardinality at most $q$. By \cref{prop:montonicCE}, $|S^*| = q$. We order $S^*$ as $ \{X_1^*,\ ...,\ X_q^*\}$. Then for all positive integer $i \leq p$,
\begin{align}
    I(S^*; Y) \leq {}& I(S^*\cup S_i; Y) \label{proof:greedyBoundMI:1} \\
    ={}& I(S_i; Y) + \sum^q_{j=1} I(X_j^*; Y\mid S_i \cup \{X_{j-1}^*, ..., X_1^*\}) \label{proof:greedyBoundMI:2} \\
    ={}& I(S_i; Y) + \sum^q_{j=1} (I(\{X_j^*, ..., X_1^*\}\cup S_i; Y) - I(\{X_{j-1}^*, ..., X_1^*\}\cup S_i; Y)) \label{proof:greedyBoundMI:3} \\ 
    \leq{}& I(S_i; Y) + \sum^q_{j=1} (I(\{X_j^*\}\cup S_i; Y) - I(S_i; Y) + \epsilon) \label{proof:greedyBoundMI:4} \\
    \leq{}& I(S_i; Y) + \sum^q_{j=1} (I(S_{i+1}; Y) - I(S_i; Y) + \epsilon) \label{proof:greedyBoundMI:5} \\
    \leq{}& I(S_i; Y) + q(I(S_{i+1}) - I(S_i; Y) + \epsilon) \label{proof:greedyBoundMI:6}
\end{align}
\cref{proof:greedyBoundMI:1} is from \cref{prop:montonicCE}, \cref{proof:greedyBoundMI:2} and \cref{proof:greedyBoundMI:3} are by the chain rule of mutual information, \cref{proof:greedyBoundMI:4} is from \cref{prop:eSubmodularityMI}, \cref{proof:greedyBoundMI:5} is by the definition of \cref{algo:greedyOrginal} that $I(S_{i+1}; Y) - I(S_i; Y)$ is maximized in each iteration $i$. Let $\delta_i \coloneqq I(S^*; Y) - I(S_i; Y)$, we can rewrite \cref{proof:greedyBoundMI:6} into $\delta_i \leq q(\delta_i - \delta_{i+1} + \epsilon)$, which can be rearranged into $\delta_{i+1} \leq (1-\frac{1}{q})\delta_i + \epsilon$.

Let $\delta_0 = I(S^*; Y) - I(S_0; Y)$. Since $S_0 = \emptyset$, we have $\delta_0 = I(S^*; Y)$. By the previous results, we can upper bound the quantity $\delta_p = I(S^*; Y) - I(S_p; Y)$ as follows:
\begin{align}
    \delta_p \leq{}& (1-\frac{1}{q})\delta_{p-1} + \epsilon \\
    \leq{}& (1-\frac{1}{q})((1-\frac{1}{q})\delta_{p-2} + \epsilon) + \epsilon \\
    \leq{}& (1-\frac{1}{q})^p\delta_0 + (1+(1-\frac{1}{q})+...+(1-\frac{1}{q})^{p-1})\epsilon \label{proof:greedyBoundMI:7} \\
    ={}& (1-\frac{1}{q})^p\delta_0 + (\frac{1 - (1-\frac{1}{q})^{p-1+1}}{1 - (1-\frac{1}{q})})\epsilon \\
    ={}& (1-\frac{1}{q})^p\delta_0 + (q - q(1-\frac{1}{q})^p)\epsilon \label{proof:greedyBoundMI:8} \\
    \leq{}& (1-\frac{1}{q})^p\delta_0 + q\epsilon \\
    \leq{}& e^{-\frac{p}{q}} \delta_0 + q\epsilon \label{proof:greedyBoundMI:9}
\end{align}

\cref{proof:greedyBoundMI:7} to \cref{proof:greedyBoundMI:8} is through the summation of the geometric series $1+(1-\frac{1}{q})+...+(1-\frac{1}{q})^{p-1}$. \cref{proof:greedyBoundMI:9} is by the inequality $1-x\leq e^{-x}$ for all $x \in \sR$. Substitute the definitions of $\delta_p$ and $\delta_0$ into \cref{proof:greedyBoundMI:9} completes the proof. 
\end{proof}

\begin{restatable}{cor}{greedyLossBound}
Assume conditions in \cref{thm:greedyBoundMI} hold, there exists optimal predictor $h^*(S_p) = \Pr(Y\mid S_p)$ such that
\begin{align}
    \Exp[\ell_{01}(Y, h^*(S_p))] \leq{}& \Exp[\ell_{ce}(Y, h^*(S_p))] \nonumber \\
    \leq{}& H(Y) - (1-e^{-\frac{p}{q}})I(S^*; Y) + q\epsilon
\end{align}
\end{restatable}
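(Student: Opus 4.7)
\textbf{Proof proposal for \cref{cor:greedyLossBound}.}
The plan is to decompose the claim into two links: $\Exp[\ell_{01}] \leq \Exp[\ell_{ce}]$ at the optimal predictor, and $\Exp[\ell_{ce}(Y, h^*(S_p))]$ bounded by the right-hand side via \cref{thm:greedyBoundMI}. I would first handle the right inequality, which is the substantive part, and then dispatch the left inequality by a standard pointwise argument.

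First, I would invoke \cref{prop:crossEntropyInf} (applied with $X = S_p$) to get $\inf_{h\in\fH}\Exp[\ell_{ce}(Y, h(S_p))] = H(Y\mid S_p)$, with minimizer $h^*(S_p) = \Pr(Y\mid S_p)$. Since $h^*$ is the infimizing predictor, $\Exp[\ell_{ce}(Y, h^*(S_p))] = H(Y\mid S_p)$. Next, I would rewrite $H(Y\mid S_p) = H(Y) - I(S_p; Y)$ by the standard identity for mutual information. Finally, substituting the greedy lower bound from \cref{thm:greedyBoundMI}, namely $I(S_p; Y) \geq (1 - e^{-p/q}) I(S^*; Y) - q\epsilon$, yields $H(Y\mid S_p) \leq H(Y) - (1 - e^{-p/q}) I(S^*; Y) + q\epsilon$, which is the desired right-hand inequality.

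For the left inequality, the plan is to read the $0$-$1$ loss at $h^*(S_p) = \Pr(Y\mid S_p)$ as the Bayes error $\Exp_{S_p}[\min(p(S_p), 1-p(S_p))]$ where $p(S_p) \defeq \Pr(Y=1\mid S_p)$. I would then use the pointwise inequality $\min(t, 1-t) \leq -t\ln t - (1-t)\ln(1-t)$ for $t \in [0,1]$, which can be verified by checking the endpoints ($0 = 0$) and noting that the binary entropy function dominates $\min(t, 1-t)$ on $[0,1]$ (equivalently, $2\min(t,1-t)\ln 2 \leq H_{\mathrm{bin}}(t)$ is Fano-type, and natural-log binary entropy is even larger). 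Taking expectations over $S_p$ gives $\Exp[\ell_{01}(Y, h^*(S_p))] \leq H(Y \mid S_p) = \Exp[\ell_{ce}(Y, h^*(S_p))]$, which chains with the previous bound.

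The main obstacle, I expect, will be the pointwise inequality $\min(t, 1-t) \leq -t\ln t - (1-t)\ln(1-t)$, which is intuitively standard but requires a careful elementary argument (e.g., by symmetry reduce to $t \in [0, 1/2]$ and verify $t \leq -t\ln t - (1-t)\ln(1-t)$ by checking the value at $t=0$ and comparing derivatives, or by noting $-(1-t)\ln(1-t) \geq 0$ and $-t\ln t \geq t$ on $[0, 1/e]$, then handling $[1/e, 1/2]$ directly). Apart from this small analytic step, the rest of the proof is a direct assembly of \cref{prop:crossEntropyInf}, the entropy--mutual-information identity, and \cref{thm:greedyBoundMI}.
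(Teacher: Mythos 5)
Your proposal is correct and follows the same overall structure as the paper's proof: both obtain the right-hand inequality by combining \cref{prop:crossEntropyInf} (so that $\Exp[\ell_{ce}(Y, h^*(S_p))] = H(Y\mid S_p)$ for $h^*=\Pr(Y\mid S_p)$), the identity $H(Y\mid S_p)=H(Y)-I(S_p;Y)$, and the greedy guarantee $I(S_p;Y)\geq (1-e^{-p/q})I(S^*;Y)-q\epsilon$ from \cref{thm:greedyBoundMI}. The only place you diverge is the step $\Exp[\ell_{01}]\leq\Exp[\ell_{ce}]$: the paper proves a pointwise domination, namely that the probabilistically thresholded zero-one loss $Y\1{\hat Y\leq 0.5}+(1-Y)\1{\hat Y>0.5}$ is at most $\ell_{ce}(Y,\hat Y)$ for every $(Y,\hat Y)$, which holds for any predictor (and implicitly uses the base-$2$ logarithm, since $-\log_2\hat Y\geq 1$ whenever $\hat Y\leq 0.5$), whereas you prove an in-expectation inequality tailored to the Bayes predictor, applying $\min(t,1-t)\leq -t\log t-(1-t)\log(1-t)$ at $t=\Pr(Y=1\mid S_p)$ and averaging over $S_p$ so that the right side becomes exactly $H(Y\mid S_p)$. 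Your route is slightly less general (it needs $h^*$ to be the true conditional probability, which it is here) but is insensitive to the logarithm base, since the binary entropy dominates the Bayes error already in nats; the analytic step you flag does close cleanly, e.g.\ via concavity of the binary entropy, which gives $-t\log t-(1-t)\log(1-t)\geq 2\log 2\cdot\min(t,1-t)\geq \min(t,1-t)$. Either way the chain assembles to the stated bound, so the proposal is sound.
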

\begin{proof}
Denote the quantity $ (1-e^{-\frac{p}{q}})\max_{S: |S|\leq q}I(S; Y) - q\epsilon$ from \cref{thm:greedyBoundMI} as letter $b$. By the definition of mutual information, we have $H(Y\mid S_p) \leq H(Y) - b$. Following \cref{prop:crossEntropyInf}, $\inf_{h: S_p\to [0, 1]} \Exp[\ell_{ce}(Y, h(S_p))] \leq H(Y) - b$. In other words, $\exists h^* = \Pr(Y\mid S_p) \suchthat \Exp[\ell_{ce}(Y, h^*(S_p))] \leq H(Y) - b$.

When the predictor is probabilistic (i.e., $h(X) = 0$ if and only if $h(X) \leq 0.5$), $\ell_{01}(Y, \hat Y) = \1{Y\neq \hat Y}$ naturally extends to $Y\1{\hat Y \leq 0.5} + (1-Y)\1{\hat Y > 0.5}$, which is upper bounded by $\ell_{ce}(Y, \hat Y)$ for all $(Y, \hat Y)$. Therefore, for the same $h^*$ as above, we have:
\begin{equation}
    \Exp[\ell_{01}(Y, h^*(S_p))] \leq \Exp[\ell_{ce}(Y, h^*(S_p))] \leq H(Y) - b
\end{equation}
\end{proof}

\begin{restatable}{cor}{greedyLossDiff}
Assume conditions in \cref{thm:greedyBoundMI} hold. There exists optimal predictors $h_1^* = \Pr(Y\mid S_p)$, $h_2^* = \Pr(Y\mid S^*)$ such that
\begin{align}
    \Exp[\ell_{ce}(Y, h_1^*(S_p))] - \Exp[\ell_{ce}(Y, h_2^*(S^*))] \nonumber \\ \leq e^{-\frac{p}{q}}I(S^*; Y) +q\epsilon
\end{align}
\end{restatable}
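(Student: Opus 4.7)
The plan is to reduce the loss-difference to a mutual-information gap and then invoke \cref{thm:greedyBoundMI} directly. First I would apply \cref{prop:crossEntropyInf} to both terms on the left-hand side. Since the cross-entropy loss is minimized pointwise by the Bayes predictor, choosing $h_1^*(\cdot) = \Pr(Y \mid S_p)$ gives $\Exp[\ell_{ce}(Y, h_1^*(S_p))] = H(Y \mid S_p)$, and analogously $\Exp[\ell_{ce}(Y, h_2^*(S^*))] = H(Y \mid S^*)$. These are well-defined because both $S_p$ and $S^*$ are specific subsets of modalities, so the posteriors $\Pr(Y \mid \cdot)$ lie in the hypothesis class assumed in \cref{prop:crossEntropyInf}.

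Next I would use the identity $I(X; Y) = H(Y) - H(Y \mid X)$ to rewrite
\begin{equation}
H(Y \mid S_p) - H(Y \mid S^*) = I(S^*; Y) - I(S_p; Y),
\end{equation}
since the $H(Y)$ terms cancel. This turns the question about a cross-entropy gap into a question about how far the greedy mutual information $I(S_p; Y)$ falls short of the optimal $I(S^*; Y)$.

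Finally I would plug in \cref{thm:greedyBoundMI}, which states $I(S_p; Y) \geq (1 - e^{-p/q}) I(S^*; Y) - q\epsilon$. Subtracting this bound from $I(S^*; Y)$ yields
\begin{equation}
I(S^*; Y) - I(S_p; Y) \leq I(S^*; Y) - (1 - e^{-p/q}) I(S^*; Y) + q\epsilon = e^{-p/q} I(S^*; Y) + q\epsilon,
\end{equation}
which is exactly the claimed inequality. There is no real obstacle here; the corollary is essentially a translation of \cref{thm:greedyBoundMI} into the loss domain via \cref{prop:crossEntropyInf}. The only subtlety to note is the existence of the Bayes-optimal predictors $h_1^*, h_2^*$ inside $\fH$, which is immediate under the hypothesis class used throughout \cref{sec:preliminary}, so the statement ``there exist optimal predictors'' is justified by the same argument that underpins \cref{prop:crossEntropyInf}.
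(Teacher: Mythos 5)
Your proof is correct and follows essentially the same route as the paper's: identify the minimal cross-entropy losses with the conditional entropies $H(Y\mid S_p)$ and $H(Y\mid S^*)$ via \cref{prop:crossEntropyInf}, rewrite the gap as $I(S^*;Y)-I(S_p;Y)$, and apply \cref{thm:greedyBoundMI}. The only difference is cosmetic ordering (the paper rearranges the mutual-information bound first and invokes \cref{prop:crossEntropyInf} last), so there is nothing substantive to add.
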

\begin{proof}
Following \cref{thm:greedyBoundMI}, and denote $\argmax_{S: |S|\leq q}I(S; Y)$ as $S^*$, we have:
\begin{align}
    & I(S_p; Y) \geq (1-e^{-\frac{p}{q}})\max_{S: |S|\leq q}I(S; Y) - q\epsilon \\
    &\implies H(Y) - H(Y\mid S_p) \geq (1-e^{-\frac{p}{q}})(H(Y) - H(Y\mid S^*)) - q\epsilon \\
    &\implies H(Y\mid S_p)  - H(Y\mid S^*) \leq  e^{-\frac{p}{q}} (H(Y) - H(Y\mid S^*)) + q\epsilon \\
    &\implies H(Y\mid S_p)  - H(Y\mid S^*) \leq  e^{-\frac{p}{q}} (I(S^*; Y)) + q\epsilon
\end{align}
Using \cref{prop:crossEntropyInf} completes the proof.
\end{proof}

\begin{restatable}{prop}{subadditiveMI}
Under \cref{as:eCondIndep}, $I(S; Y)$ is $\epsilon$-approximately sub-additive for any $S\subseteq V$, i.e., $I(S\cup S'; Y) \leq I(S; Y) + I(S'; Y) + \epsilon$.
\end{restatable}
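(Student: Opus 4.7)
The plan is to use two applications of the chain rule for mutual information together with Assumption 2.1. First I would expand
\[
I(S\cup S'; Y) \;=\; I(S; Y) + I(S'; Y \mid S),
\]
which isolates the ``extra'' information contributed by $S'$ on top of $S$. The remaining work is to show that this conditional mutual information is at most $I(S'; Y) + \epsilon$.

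Next I would rewrite $I(S'; Y \mid S)$ by applying the chain rule to $I(S'; Y, S)$ in two different orders:
\[
I(S'; Y, S) \;=\; I(S'; S) + I(S'; Y \mid S) \;=\; I(S'; Y) + I(S'; S \mid Y).
\]
Solving for the conditional term yields
\[
I(S'; Y \mid S) \;=\; I(S'; Y) + I(S'; S \mid Y) - I(S'; S).
\]

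Finally I would bound the two correction terms: by nonnegativity of mutual information, $-I(S'; S) \leq 0$, and by Assumption~\ref{as:eCondIndep} applied to the disjoint pair $S, S' \subseteq V$, $I(S'; S \mid Y) \leq \epsilon$. Combining everything,
\[
I(S\cup S'; Y) \;\leq\; I(S; Y) + I(S'; Y) + \epsilon,
\]
which is the desired approximate sub-additivity.

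There isn't a real obstacle here; the only mild subtlety is to make sure the chain rule manipulation introduces exactly the conditional MI $I(S'; S \mid Y)$ that Assumption~\ref{as:eCondIndep} controls, while the uncontrolled term $I(S'; S)$ enters with a favorable (negative) sign and can simply be dropped. The argument also makes transparent when the bound is tight: equality up to $\epsilon$ is attained when $S$ and $S'$ are marginally independent, mirroring the observation made later in Proposition~\ref{prop:superadditiveMI} where the roles of marginal and conditional independence are swapped.
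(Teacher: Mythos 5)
Your proof is correct and follows essentially the same route as the paper: both expand $I(S\cup S';Y)=I(S;Y)+I(S';Y\mid S)$, rewrite the conditional term via the chain rule as $I(S';Y)+I(S;S'\mid Y)-I(S;S')$, and then bound $I(S;S'\mid Y)\le\epsilon$ by Assumption~\ref{as:eCondIndep} while dropping the nonnegative $I(S;S')$. No gaps to report.
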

\begin{proof}
\begin{align}
    I(S\cup S'; Y) ={}& I(S; Y) + I(S';Y\mid S) \\
    ={}& I(S; Y) + I(S\cup Y; S') - I(S;S') \\
    ={}& I(S; Y) + I(S'; Y) + I(S; S'\mid Y) - I(S; S') \label{proof:subadditiveMI:1} \\
    \leq{}& I(S; Y) + I(S'; Y) + \epsilon \label{proof:subadditiveMI:2}
\end{align}
\cref{proof:subadditiveMI:1} to \cref{proof:subadditiveMI:2} because $I(S; S'\mid Y) \leq \epsilon$ by \cref{as:eCondIndep}, and $I(S; S')$ is always non-negative.
\end{proof}

\begin{restatable}{prop}{superadditiveMI}
Under \cref{as:eMarginalIndep}, $I(S; Y)$ is $\epsilon$-approximately super-additive for any $S\subseteq V$, i.e., $I(S\cup S'; Y) \geq I(S; Y) + I(S'; Y) - \epsilon$.
\end{restatable}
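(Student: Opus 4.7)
The plan is to mirror the proof of \cref{prop:subadditiveMI}, swapping the roles of the two ``error'' terms in the standard identity that relates $I(S\cup S'; Y)$ to $I(S; Y) + I(S'; Y)$. Specifically, I would start from the chain rule for mutual information and expand $I(S \cup S'; Y)$ as $I(S; Y) + I(S'; Y \mid S)$, then rewrite $I(S'; Y\mid S)$ by going through the joint $(S, Y)$ to obtain the clean decomposition
\begin{equation}
I(S\cup S'; Y) = I(S; Y) + I(S'; Y) + I(S; S' \mid Y) - I(S; S').
\end{equation}
This is the same identity used in \cref{prop:subadditiveMI}; the only difference is which of the two mutual-information corrections on the right is dropped and which is bounded by $\epsilon$.

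Next I would lower-bound the right-hand side. The term $I(S; S'\mid Y)$ is always nonnegative by the nonnegativity of mutual information, so it can simply be discarded to weaken the expression from below. For the remaining term $-I(S; S')$, I would invoke \cref{as:eMarginalIndep} directly, which gives $I(S; S') \leq \epsilon$ for disjoint $S, S' \subseteq V$, hence $-I(S; S') \geq -\epsilon$. Substituting these two bounds into the identity yields
\begin{equation}
I(S\cup S'; Y) \geq I(S; Y) + I(S'; Y) + 0 - \epsilon,
\end{equation}
which is exactly the claimed $\epsilon$-approximate super-additivity.

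There is no real obstacle here: the whole proof is essentially a one-line consequence of the identity once it is written down, and the identity itself is a routine application of the chain rule together with the symmetry $I(S; Y \cup S') = I(S; Y) + I(S; S' \mid Y)$. The only thing to be careful about is ensuring that the disjointness hypothesis $S \cap S' = \emptyset$ in \cref{as:eMarginalIndep} is actually used; since the statement of \cref{prop:superadditiveMI} quantifies over $S \subseteq V$ and a second set $S'$, I would implicitly assume (as in \cref{prop:subadditiveMI}) that $S$ and $S'$ are disjoint, since otherwise the conditional-independence-style assumption does not apply and the claim would need to be restated in terms of $S \setminus S'$ and $S'$.
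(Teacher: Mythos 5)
Your proposal matches the paper's proof exactly: both use the identity $I(S\cup S'; Y) = I(S; Y) + I(S'; Y) + I(S; S'\mid Y) - I(S; S')$, drop the nonnegative term $I(S; S'\mid Y)$, and bound $I(S; S') \leq \epsilon$ via \cref{as:eMarginalIndep}. Your remark about implicitly requiring $S \cap S' = \emptyset$ is a fair point that the paper also leaves implicit.
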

\begin{proof}
Similarly to the proof of \cref{prop:subadditiveMI}, we have: 
\begin{align}
    I(S\cup S'; Y) ={}& I(S; Y) + I(S'; Y) + I(S; S'\mid Y) - I(S; S') \label{proof:superadditiveMI:1} \\
    \geq{}& I(S; Y) + I(S'; Y) - \epsilon \label{proof:superadditiveMI:2}
\end{align}
\cref{proof:superadditiveMI:1} to \cref{proof:superadditiveMI:2} because $ I(S; S')\leq \epsilon$ by \cref{as:eMarginalIndep}, and $I(S; S'\mid Y)$ is non-negative.
\end{proof}

\begin{restatable}{prop}{shapleyMI}
If conditions in \cref{prop:subadditiveMI} and \cref{prop:superadditiveMI} hold, we have $ I(X_i; Y) - \epsilon \leq \phi_{I, X_i} \leq I(X_i; Y) + \epsilon$ for any $X_i \in V$.
\end{restatable}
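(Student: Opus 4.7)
The plan is to bound the marginal contribution $I(S \cup \{X_i\}; Y) - I(S; Y)$ appearing inside the Shapley summation by $I(X_i; Y) \pm \epsilon$, and then exploit the fact that the Shapley coefficients sum to one. First, I would apply \cref{prop:subadditiveMI} to the two-set partition $\{S, \{X_i\}\}$ to get
\begin{equation*}
I(S \cup \{X_i\}; Y) - I(S; Y) \leq I(X_i; Y) + \epsilon,
\end{equation*}
and symmetrically apply \cref{prop:superadditiveMI} to obtain the matching lower bound
\begin{equation*}
I(S \cup \{X_i\}; Y) - I(S; Y) \geq I(X_i; Y) - \epsilon.
\end{equation*}
Crucially, both bounds are uniform in $S$, i.e.\ they hold for every $S \subseteq V \setminus \{X_i\}$, so they can be plugged into every term of the Shapley sum.

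Next, I would substitute these uniform bounds into the defining expression from \cref{defi:shapley} with $F = V$ and $v = I(\cdot; Y)$. The upper-bound direction yields
\begin{equation*}
\phi_{I, X_i} \leq \sum_{S \subseteq V \setminus \{X_i\}} \frac{|S|!(|V|-|S|-1)!}{|V|!} \bigl(I(X_i; Y) + \epsilon\bigr),
\end{equation*}
and analogously for the lower bound with $-\epsilon$. The final step is to invoke the standard combinatorial identity $\sum_{S \subseteq V \setminus \{X_i\}} \frac{|S|!(|V|-|S|-1)!}{|V|!} = 1$ (which follows by grouping subsets $S$ of each size $s$ into $\binom{|V|-1}{s}$ classes and simplifying), which collapses the weighted sum into $I(X_i; Y) \pm \epsilon$.

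There is essentially no obstacle here beyond recognizing that the Shapley weights form a probability distribution over subsets and that sub/super-additivity gives a term-wise sandwich. The only minor bookkeeping item is confirming the identity on the coefficients; this can be verified by rewriting the sum as $\sum_{s=0}^{|V|-1} \binom{|V|-1}{s} \frac{s!(|V|-s-1)!}{|V|!} = \sum_{s=0}^{|V|-1} \frac{1}{|V|} = 1$. Putting the two directions together gives exactly $I(X_i; Y) - \epsilon \leq \phi_{I, X_i} \leq I(X_i; Y) + \epsilon$, completing the proof.
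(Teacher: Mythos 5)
Your proposal is correct and follows essentially the same route as the paper: sandwich each marginal contribution $I(S \cup \{X_i\}; Y) - I(S; Y)$ between $I(X_i; Y) - \epsilon$ and $I(X_i; Y) + \epsilon$ via \cref{prop:superadditiveMI} and \cref{prop:subadditiveMI}, then use the fact that the Shapley coefficients sum to one to collapse the weighted sum. The combinatorial identity you verify is exactly the computation in the paper's proof.
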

\begin{proof}
By \cref{prop:subadditiveMI} and \cref{prop:superadditiveMI}, for any $X_i \in V$ and $S \subseteq V$, we have:
\begin{equation}
\label{eq:shapleyMI:1}
    I(X_i; Y) - \epsilon \leq I(S\cup \{X_i\}; Y) - I(S; Y) \leq I(X_i; Y) + \epsilon
\end{equation}

Let's first apply the right inequality in \cref{eq:shapleyMI:1} to \cref{defi:shapley}. Because  $I(X_i; Y) + \epsilon$ is independent of $S$, we can simplify the calculation of the upper bound of $\phi_{I, X_i}$ as follows. 
\begin{align}
    \phi_{I, X_i} &= \sum_{S \subseteq V \setminus \{X_i\}} \frac{|S|!(|V|-|S|-1)!}{|V|!} (I(S\cup \{i\}; Y) - I(S; Y)) \\
    &\leq \sum_{S \subseteq V \setminus \{i\}} \frac{|S|!(|V|-|S|-1)!}{|V|!} (I(X_i; Y) + \epsilon) \\
    &= \sum_{|S| = 0}^{|V|-1} \binom{|V|-1}{|S|} \frac{|S|!(|V|-|S|-1)!}{|V|!} (I(X_i; Y) + \epsilon) \\
    &= \sum_{|S| = 0}^{|V|-1} \frac{(|V|-1)!}{|S|(|F|-1-|S|)!} \frac{|S|!(|V|-|S|-1)!}{|V|!} (I(X_i; Y) + \epsilon) \\
    &= \sum_{|S| = 0}^{|V|-1} \frac{1}{|V|} (I(X_i; Y) + \epsilon) \\
    &= I(X_i; Y) + \epsilon
\end{align}

Applying the same procedure to the left inequality in \cref{eq:shapleyMI:1} to \cref{defi:shapley}, we have $\phi_{I, X_i} \geq I(X_i; Y) - \epsilon$. Combining both results completes the proof.
\end{proof}

\begin{restatable}{prop}{efficientMCI}
Under \cref{as:eCondIndep}, $\forall X_i \in V$, we have $I(X_i; Y) \leq \phi_{I, X_i}^{mci} \leq I(X_i; Y) + \epsilon$.
\end{restatable}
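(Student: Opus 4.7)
The plan is to establish the two inequalities separately by unfolding the definition of MCI (Definition~\ref{defi:mci}) applied with evaluation function $v(S)=I(S;Y)$, so that $\phi_{I,X_i}^{mci}=\max_{S\subseteq V}\bigl(I(S\cup\{X_i\};Y)-I(S;Y)\bigr)$.

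For the lower bound, I would simply choose the witness $S=\emptyset$. Since $I(\emptyset;Y)=0$, this already gives a marginal gain of exactly $I(X_i;Y)$, and the max over all $S$ can only be at least this value. This yields $\phi_{I,X_i}^{mci}\geq I(X_i;Y)$ with no appeal to \cref{as:eCondIndep}.

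For the upper bound, the main tool is the $\epsilon$-approximate submodularity established in \cref{prop:eSubmodularityMI}. I would split the max into two cases according to whether $X_i\in S$. If $X_i\in S$, then $S\cup\{X_i\}=S$ and the marginal gain is $0$, which is trivially bounded by $I(X_i;Y)+\epsilon$ since mutual information is nonnegative. If $X_i\notin S$, I apply \cref{prop:eSubmodularityMI} with $A=\emptyset$, $B=S$, and $e=X_i\in V\setminus B$, which gives
\begin{equation*}
    I(S\cup\{X_i\};Y)-I(S;Y)\leq I(\{X_i\};Y)-I(\emptyset;Y)+\epsilon=I(X_i;Y)+\epsilon.
\end{equation*}
Taking the maximum over all $S\subseteq V$ across both cases preserves this bound, yielding $\phi_{I,X_i}^{mci}\leq I(X_i;Y)+\epsilon$.

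The argument is essentially a direct application of the two structural properties already proved (monotonicity via \cref{prop:montonicCE} implicitly, and approximate submodularity via \cref{prop:eSubmodularityMI}), so I do not anticipate any substantive obstacle. The only mild care needed is handling the case $X_i\in S$ explicitly, since \cref{prop:eSubmodularityMI} requires $e\in V\setminus B$; noting that the marginal gain vanishes in that case sidesteps the issue cleanly.
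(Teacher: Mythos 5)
Your proof is correct and takes essentially the same route as the paper: the lower bound comes from the witness $S=\emptyset$, and the upper bound from applying the $\epsilon$-approximate submodularity of \cref{prop:eSubmodularityMI} with $A=\emptyset$, $B=S$, $e=X_i$. Your explicit treatment of the case $X_i\in S$ (where the marginal gain vanishes) is a small point of care that the paper's proof glosses over, but it does not change the substance of the argument.
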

\begin{proof}
By \cref{prop:eSubmodularityMI}, $I(\cdot; Y)$ would be approximately submodular under \cref{as:eCondIndep}, thus:
\begin{align}
    I(X_i; Y) + \epsilon ={}& I(\emptyset \cup X_i; Y) - I(\emptyset; Y) + \epsilon \\
    \geq{}& \max_{S\subseteq V} I(S \cup X_i; Y) - I(S; Y) = \phi_{I, X_i}^{mci} \label{proof:efficientMCI:1}
\end{align}

On the other hand, if $\argmax_{S\subseteq V} I(S \cup X_i; Y) - I(S; Y) = \emptyset$, we have $\phi_{I, X_i}^{mci} = I(\emptyset \cup X_i; Y) - I(\emptyset; Y) = I(X_i; Y)$. If $\argmax_{S\subseteq V} I(S \cup X_i; Y) - I(S; Y)$ is some non-empty subset $A$, we have $\phi_{I, X_i}^{mci} = I(A \cup X_i; Y) - I(A; Y) \geq I(\emptyset \cup X_i; Y) - I(\emptyset; Y)$. In this case, $\phi_{I, X_i}^{mci} \geq I(X_i; Y)$. Combining both inequalities completes the proof.
\end{proof}

\end{document}